\newcommand{\bibpath}{bib}
\definecolor{cite_color}{RGB}{0, 0, 255}
\definecolor{link_color}{RGB}{153, 0,0}  %  red
\definecolor{url_color}{RGB}{153, 102,  0}
\definecolor{emp_color}{RGB}{0,0,255}
\renewcommand{\citet}{\cite}
 \crefname{section}{\S\!}{\S\S\!}
 \crefname{theorem}{Theorem}{Theorems}
 \crefname{lemma}{Lemma}{Lemmas}
 \crefname{equation}{Equation}{Equations}
 \crefname{proposition}{Proposition}{Propositions}
 \crefname{claim}{Claim}{Claims}
\crefname{appendix}{\S\!}{\S\S\!}
   \crefname{algorithm}{Alg.}{Alg.}
 \crefname{figure}{Figure}{Figures}
 \crefname{table}{Table}{Tables}
 \crefname{remark}{Remark}{Remarks}
 \crefname{definition}{Definition}{Definitions}
 \crefname{equatinon}{Equation}{Equations}
 \crefname{corollary}{Corollary}{Corollaries}
\let \oldtextcircled \textcircled
\renewcommand{\textcircled}[1]{\oldtextcircled{\footnotesize #1}}
\setlist[itemize]{leftmargin=9mm}
\newcommand{\appendixtitle}[1]{
	\begin{center}
		\LARGE \bf #1
	\end{center}
}
\def\E{{\mathbb E}}
\def\X{{\cal X}}
\def\H{{\mathbb  H}}
\def\opt{\ensuremath{\Omega^*}}
\def\optcont{\ensuremath{\x^*}}
\def \c{\mathbf{c}}
\def \v{\mathbf{v}}
\def \r{\mathbf{r}}
\def \a{\mathbf{a}}
\def \b{\mathbf{b}}
\def \d{\mathbf{d}}
\def \x{\mathbf{x}}
\def \y{\mathbf{y}}
\def \s{\mathbf{s}}
\def \u{\mathbf{u}}
\def \bu{\mathbf{u}}
\def \BA{\mathbf{A}}
\def \BW{\mathbf{W}}
\def \R{{\mathbb{R}}}
\def \trans{\top}
\newcommand{\pare}[1]{{#1}}  % parethese sth, with extra {} added, used in a
\newcommand{\maxcut}{{\textsc{MaxCut}}\xspace}
\newcommand{\argmax}{{\arg\max}}
\newcommand{\algname}[1]{{\texttt{#1}}}
\newcommand{\bas}{\mathbf{e}} % standard basis
\newcommand{\groundset}{\ensuremath{\mathcal{V}}}
\newcommand{\sete}[3]{\mathbf #1|_{#2} #3}
\newtheorem{theorem}{Theorem}
\newtheorem{lemma}{Lemma}
\newtheorem{observation}{Observation}
\newcommand{\parti}{\text{Z}} % partition function
\newcommand{\zero}{\mathbf{0}} % vector zero
\newcommand{\one}{\mathbf{1}} % vector zero
\newcommand{\ele}{v} % element of a ground set
\newcommand{\multi}{f_{\text{mt}}} % multilinear extension
\newcommand{\bigo}[1]{\mathcal {O}\! \left(#1\right)}
\newcommand{\pa}{\text{PA}}
\newcommand{\intermed}{\mathbf{o}}
\def \bu{\mathbf{u}}
\def \m{\mathbf{m}}
\def \data {\text{D} }
\def \nmf {\text{CoordinateAscent}\xspace}
\def \dgmf {\text{DG-MeanField}\xspace}
\def \palong {\text{Posterior-Agreement}\xspace} % long name for PA
\renewcommand{\mid}{|}
\newcommand{\kl}[2]{\mathbb{KL}(#1\|#2)}
\newcommand{\entropy}[1]{\mathbb{H}(#1)}
\title{
	Optimal DR-Submodular Maximization and Applications to Provable Mean Field Inference
}
\author{
An Bian\\
%Department of Computer Science\\
	ETH Z{u}rich \\
	\texttt{ybian@inf.ethz.ch}\\
	 \And
	Joachim M. Buhmann\\
%Department of Computer Science\\
	ETH Z{u}rich\\
	\texttt{jbuhmann@inf.ethz.ch} \\
	 \And
	Andreas Krause\\
%Department of Computer Science\\
	ETH Z{u}rich\\
	\texttt{krausea@ethz.ch} \\
	\AND
	\normalfont{\large May 19, 2018
%	\thanks{First appeared on arXiv on May 19, 2018.}
	\thanks{First appeared on arXiv on this date.}
	}
%\normalfont{\large December 9, 2016}
}
\begin{document}

\renewcommand{\b}{\mathbf{b}}
\renewcommand{\v}{\mathbf{v}}
\renewcommand{\u}{\mathbf{u}}
\renewcommand{\r}{\mathbf{r}}
\renewcommand{\d}{\mathbf{d}}
\renewcommand{\c}{\mathbf{c}}
\renewcommand{\O}{\mathcal{O}}

%\twocolumn[
%
%\aistatstitle{Optimal Continuous DR-Submodular Maximization and Applications to Provable Mean Field Inference}
%
%\aistatsauthor{ Author 1 \And Author 2 \And  Author 3 }
%
%\aistatsaddress{ Institution 1 \And  Institution 2 \And Institution 3 } ]

\maketitle

%\vspace{1em}

\begin{abstract}% 

  Mean field inference 
  in probabilistic  models
   is generally a
  highly nonconvex problem.  
Existing optimization methods, e.g., coordinate ascent algorithms, can
only generate local optima.

 In this work we propose provable mean field methods for probabilistic
 log-submodular models and its posterior agreement (PA) with strong
 approximation guarantees.
 The main algorithmic technique is a new Double Greedy scheme, termed
 \algname{DR-DoubleGreedy}, for \textit{continuous} DR-submodular
 maximization with box-constraints.  This one-pass algorithm achieves the \textit{optimal} 1/2
 approximation ratio, which may be of independent interest. 
 We validate the superior performance of our algorithms with
 baseline results on both synthetic and real-world datasets.

\end{abstract}

%\begin{keywords}
%
%\end{keywords}

%\newpage 
%{
%\hypersetup{linkcolor=blue}
%\tableofcontents
%}
%\clearpage 

\section{Introduction}

Consider the following scenario: You want to build a recommender
system for $n$ products to sell. Let $\groundset$ contain all the
products. The system is expected to recommend a subset of products
$S\subseteq \groundset$ to the user.  This recommendation 
should reflect relevance and diversity of the user's choice, such that
it will raise the readiness to buy.  The two most important components
in building such a system are (1) learning a utility function $F(S)$,
which measures the utility of any subset of products, and (2) inference,
i.e., finding the subset $\opt$ with the highest utility given the
learnt function $F(S)$.
The above task can be achieved using a class of probabilistic
graphical models that devise a distribution on all subsets of
$\groundset$. Such a distribution is known as a point process. 
Specifically, it defines $p(S)\propto \exp(F(S))$, which renders subset
of products $S$ with high utility to be very likely suggested.  In
general, inference in point processes is \#P-hard.  One resorts to
approximate inference methods via either variational techniques
\citep{wainwright2008graphical} or sampling.

In this paper we develop  mean field methods with provable guarantees.
Both of the two components in the  recommender system example above can
be achieved via provable mean field methods since (i) the latter  provide
approximate inference given a utility function and, (ii) by using
proper differentiation techniques, the iterative process of mean field
approximation can be unrolled to serve as a differentiable layer
\citep{zheng2015conditional}, thus enabling the backpropagation of the
training error to parameters of $F(S)$.  Thereby, learning $F(S)$ in
an end-to-end fashion can utilize modern deep learning and stochastic
optimization techniques.

The most important property which we require on $F(S)$ is
\emph{submodularity},
which
naturally models relevance and diversity. Djolonga et al. 
\cite{djolonga14variational} have used submodular functions $F(S)$ to
define two classes of point processes: 
$p(S)\propto \exp(F(S))$ is
termed probabilistic log-submodular models, while
$p(S)\propto \exp(-F(S))$ is called  probabilistic log-supermodular models.  
They
are strict generalizations of classical point processes, such as DPPs
\citep{kulesza2012determinantal}.
The variational techniques from
\citet{djolonga14variational,djolonga16mixed} focus on giving
tractable upper bounds of the log-partition functions.  This work
provides provable \emph{lower} bounds through mean field approximation, which
also completes the picture of variational inference for probabilistic
submodular models (PSMs).

The most frequently used algorithm for mean field approximation is
the \algname{\nmf} algorithm\footnote{It is known under various names in
  the literature, e.g., iterated conditional modes (ICM), naive mean
  field algorithm, etc.}. It maximizes the ELBO objective in a
coordinate-wise manner.
\algname{\nmf} has been shown to reach stationary points/local
optima. However, local optima may be arbitrarily poor, as we
demonstrate in \cref{sec_bad_localoptima}, and \algname{\nmf} would
get stuck in these poor local optima without extra techniques, which
motivates our pursuit to develop provable methods.

We firstly investigate the properties of mean field approximation for
probabilistic log-submodular models, and show that it falls into a
general class of nonconvex problem, called continuous DR-submodular
maximization with box-constraints. 
Continuous submodular optimization
is a class of well-behaved nonconvex programs, which has attracted 
increasingly more attention recently. 
Then we propose a new one-epoch algorithm for this general class of
nonconvex problem, called \algname{DR-DoubleGreedy}.  It achieves the
\textit{optimal} $1/2$ approximation ratio in linear time.
Lastly, we extend one-epoch algorithms to multiple epochs, resulting
in provable mean field algorithms, termed \algname{\dgmf}.

\textbf{Typical Application Domains.}
Recommender systems are just one illustrating example. There are
numerous scenarios that can benefit from the mean field method in this
work. These settings include, but not limited to, existing applications of
submodular models, such as diversity models
\citep{Tschiatschek16diversity,djolonga16mixed}, experimental design
using approximate submodular objectives
\citep{bianicml2017guarantees}, variable selection
\citep{Krause05nearoptimalnonmyopic}, data summarization
\citep{lin2011class},
dictionary learning \citep{krause2010submodular} etc.
Another category of applications is conducting model validation using
information-theoretic criteria. In order to infer the hyperparamters
in the model  $F(S)$, practitioners do validation by
splitting the training data into multiple folds, and then train models
on them.  Posterior-Agreement (PA,
\citep{Buhmann10isit,bian2016information}) provides an
information-theoretic criterion for the models trained on these
folds, to measure the fitness of one specific hyperparameter
configuration.  We show in \cref{sec_pa} that PA can be efficiently
approximated by the techniques developed in this work.

\paragraph{Contributions.} Motivated by the broad applicability
of mean field approximation, we contribute in the following respects:
i) We propose  the first \emph{optimal}
algorithm for the general problem of  continuous DR-submodular maximization with
box-constraints, which runs in linear time. 
ii) Based on the optimal algorithm, we propose provable mean field approaches for probabilistic
log-submodular models and its PA.
 iii) We also present
efficient polynomial methods to evaluate the multilinear extensions
for a large category of practical objectives, which
are used for optimizing the mean field objectives.

\subsection{Problem Statement and Related Work}

\textbf{Notation.}  Boldface letters, e.g. $\x$, represent vectors.
Boldface capital letters, e.g. $\BA$, denote  matrices. $x_i$ is the
$i^{\text{th}}$ entry of $\x$, $A_{ij}$ the $(ij)^{\text{th}}$ entry
of $\BA$.  We use $\bas_i$ to denote the standard $i^\text{th}$ basis
vector.  $f(\cdot)$ is used to specify a continuous function, and
$F(\cdot)$ to represent a set function.  $[n]:= \{1,...,n\}$.
Given two vectors $\x,\y$, $\x\leq \y$ means
$ \forall i, \,x_i\leq y_i$.  $\x\vee \y$ and $\x \wedge \y $ is
defined as coordinate-wise maximum and coordinate-wise minimum,
respectively. Finally, $\sete{x}{i}{k}$ is the operation of setting
the $i^\text{th}$ entry of $\x$ to $k$, while keeping all the others
unchanged, i.e., $\sete{x}{i}{k} = \x - x_i \bas_i + k\bas_i$.

All of the mean field approximation problems investigated in this work
fall into the following nonconvex maximization problem:
\begin{align}\label{opt_problem}
\underset{{ \x \in [\a, \;\b]}}{{\text {maximize}}} \;\; f(\x),    \tag{{\color{link_color}P}}
\end{align}
where $f: \X \rightarrow \R$ is continuous DR-submodular,
$\X = \prod_{i=1}^{n}\X_i$, each $\X_i$ is an interval
\citep{bach2015submodular,bian2017guaranteed}.  Continuous
DR-submodular functions define a subclass of continuous submodular
functions with the additional diminishing returns
(DR) property: $\forall \a\leq \b \in \X$,
$\forall i \in [n], \forall k\in \R_+$
it holds
$f(k\bas_i+\a) - f(\a) \geq f(k\bas_i+\b) - f(\b)$.
If $f$ is differentiable, DR-submodularity is equivalent to $\nabla f$
being an \textit{antitone} mapping from $\R^n$ to $\R^n$. If $f$ is
twice-differentiable, DR-submodularity is equivalent to all of the
entries of $\nabla^2 f(\x)$ being non-positive.
A function $f$ is DR-supermodular iff $-f$ is DR-submodular.

\textbf{Background \& Related Work. 
}
\label{cont_subopt_bk}
Submodularity is one of the most important properties in combinatorial
optimization and many applications for machine learning, with strong
implications for both guaranteed minimization and approximate
maximization in polynomial time \citep{krause2012submodular}.  
Continuous extensions of submodular set functions play an 
important role in submodular optimization, representative 
instances include Lov{\'a}sz extension \citep{lovasz1983submodular}, multilinear extension \citep{calinescu2007maximizing,DBLP:conf/stoc/Vondrak08,chekuri2014submodular,chekuri2015multiplicative}
and the softmax extension for DPPs \citep{gillenwater2012near}.
These
guaranteed optimizations have been advanced to continuous domains
recently, for both minimization
\citep{bach2015submodular,staib2017robust} and maximization
\citep{bian2017guaranteed,biannips2017nonmonotone,Wilder2017RiskSensitiveSO,chen2018online,mokhtari2018decentralized}.
Specifically, Bach \cite{bach2015submodular} studies continuous
submodular minimization without constraints. He also discusses the
possibility of using the technique for mean field inference of
probabilistic log-supermodular models.
\cite{bian2017guaranteed,biannips2017nonmonotone} characterize
continuous submodularity using the DR property and propose provable
algorithms for maximization.

Most related to this work is the classical problem of {unconstrained
  submodular maximization (USMs)}, which has been studied in binary
\citep{buchbinder2012tight}, integer \citep{soma2017non} and
continuous domains \citep{bian2017guaranteed}.
For the general problem \labelcref{opt_problem}, at first glance one
may consider discretization-based methods: Discretizing the continous
domain and transform \labelcref{opt_problem} to be an integer
optimization problem, then solve it using the reduction
\citep{ene2016reduction} or the integer Double Greedy algorithm
\citep{soma2017non}.  However, discretization-based methods are not
practical for \labelcref{opt_problem}: Firstly discretization will
inevitably introduce errors for the original continuous
problem \labelcref{opt_problem}; Secondly, the computational cost is
too high\footnote{e.g., the method from \cite{soma2017non} reaches
	$\frac{1}{2+\epsilon}$-approximation in
	$O( \frac{|\groundset|}{\epsilon}) \log( \frac{\Delta}{\delta})
	\log(B)(\theta + \log (B) )$
	time, $B$: \#grids of discretization, $\Delta$: the maximal
  positive marginal gain, $\delta$: minimum positive marginal gain}.
Thus we turn to continuous methods.
The shrunken Frank-Wolfe in \cite{biannips2017nonmonotone} provides
$1/e$ approximation guarantee and sublinear rate of convergence for
\labelcref{opt_problem}, but it is still computationally too
expensive:
In each iteration it has to calculate the full gradient, which costs
$n$ times as much as computing a partial derivative.

Based on the above analysis, the most promising algorithm to consider
would be the Double Greedy algorithm \citep{bian2017guaranteed}, which
needs to solve  $\bigo{n}$ 1-D subproblems, and achieves $1/3$
guarantee for continuous \emph{submodular} maximization. Since it only
needs $f(\x)$ to be continuous submodular, we call it
\algname{Submodular-DoubleGreedy} in the sequel.
In this work we propose a new 
Double Greedy scheme, achieving the 
optimal $1/2$ approximation ratio of \labelcref{opt_problem}.

%
%
%

\iffalse 

\paragraph{Provable Mean Field Algorithms via Guaranteed
DoubleGreedy Initializer.}

Motivated by this, we propose to use \algname{DoubleGreedy} as the
initializer, then iteratively improve the solution. We call these
class of algorithms \algname{DoubleGreedy-MeanField}, abbreviated as
\algname{DG-MeanField}.  \fi

%

%
%
%
%
%
 %
%
{\palong} (PA) is developed as an information-theoretic criterion for
model selection \citep{gorbach2017model} and algorithmic validation
\citep{informativemst,bian2016information}.  It originates from the
approximation set coding framework proposed by \cite{Buhmann10isit}.
Recently, \cite{buhmannaposterior} prove rigorous asymptotics of PA on
two typical combinatorial problems: Sparse minimum bisection and
Lawler's quadratic assignment problem.
\cite{djolonga14variational,djolonga15scalable} study variational
inference for PSMs, they propose L-Field to give upper bounds for
log-supermodular models through optimizing the
subdifferentials. However, they did not give tractable lower bounds
for probabilistic log-submodular models.

Along with the development of this work\footnote{\cite{niazadeh2018optimal} is a  contemporary  work, both papers were released on arXiv.}, 
\cite{niazadeh2018optimal} proposed an optimal algorithm 
for DR-submodular maximization. Their algorithm (Algorithm 4 in \cite{niazadeh2018optimal}, termed \algname{BSCB}: Binary-Search Continuous Bi-greedy) needs to estimate 
the partial derivative of the objective, which is not needed in our algorithm. 
Furthermore, our algorithm is arguably  
easier to interpret and implement than \algname{BSCB}.
We did extensive experiments (see \cref{sec_exp} for details on experimental statistics) to compare them, the results show 
that both algorithms generate promising solutions, however, 
 our algorithm produces better solutions than \algname{BSCB} in most of the experiments.

\section{Applications to Mean Field Approximation}

Mean field inference aims to approximate the intractable distribution
$p(S)\propto \exp(F(S))$ by a fully factorized surrogate distribution
$q(S|{\x}):= \prod_{i\in S}x_i \prod_{j\notin S}(1-x_j), \x\in[0,1]^n$.
This can be achieved by maximizing the \text{(ELBO)} objective, which
provides a lower bound for the log-partition function,
$ \text{(ELBO)}\leq \log\parti = \log \sum_{S\subseteq
	\groundset} \exp(F(S))$.  Specifically, the optimization problem is,
\begin{align}\notag 
  \max_{ \x \in [\zero, \one]} f(\x) 
  &  := 
    \overbrace{\E_{q(S\mid \x)}[F(S)]}^{\text{multilinear extension of }  F(S): \multi(\x)}\\\notag
 & - \sum\nolimits_{i=1}^{n} [x_i\log x_i + (1-x_i)\log(1-x_i)]  \\\label{opt_problem_meanfield}
  & \; = \multi(\x)   + \sum\nolimits_{i \in\groundset} H(x_i),
    \quad \text{(\textcolor{link_color}{ELBO})}
\end{align}
where $H(x_i) := -[ x_i \log x_i + (1-x_i)\log (1-x_i ) ] $ is the
binary entropy function and by default $0\log 0 =0$.
$\multi(\x) :=\E_{q(S\mid \x)}[F(S)]$ is the multilinear extension
\citep{DBLP:dblp_conf/ipco/CalinescuCPV07} of $F(S)$.  The above
(ELBO) is continuous DR-submodular w.r.t. $\x$, thus falling into the
general problem class \labelcref{opt_problem}.  At first glance,
$\multi(\x)$ seems to require an exponential number of operations for
evaluation; we show in \cref{sec_computing_multilinear} that
$\multi(\x)$ and its gradients can be computed precisely in polynomial
time for many classes of practical objectives, such as facility
location, FLID \citep{Tschiatschek16diversity}, set cover
\citep{lin2011optimal} and graph cuts.
Maximizing \text{(ELBO)} to optimality provides  the tightest
lower bound of $\log \parti$ in terms of the KL divergence
$\kl{q}{p}$.  We put details in \cref{sec_meanfield_lowerbounds}.

In addition to the traditional mean field objective (ELBO)
in \labelcref{opt_problem_meanfield}, here we further formulate a
second class of mean field objectives. They come from \palong (PA) for
probabilistic log-submodular models, which is an information-theoretic
criterion to conduct model and algorithmic validation
\citep{Buhmann10isit,buhmannaposterior,bian2016information}.

\subsection{Mean Field Inference  of \palong (PA)  
}\label{sec_pa}

Let us again consider the recommender example: usually there are some
hyperparameters in the model/utility function $F(S)$ that require
adaptation to the input data.
One natural way to do so is through model validation: Split the
training data into multiple folds, train a model on each fold $\data$
one would infer a ``noisy" posterior distribution $p(S\mid \data)$.  PA
measures the agreement between these ``noisy" posterior distributions.

Assume w.l.o.g. that  there are
two folds of data $\data', \data''$  in the sequel. 
In the PA framework, we have two consecutive 
targets:  1) Direct inference based on the two posterior  distributions 
$p(S\mid \data')$ and $p(S\mid \data'')$. 
This task amounts to find the MAP solution of the PA distribution
(which is discussed in the next paragraph),
it can be approximated by standard mean field inference.
2) Use the PA objective \labelcref{pa_objective} as a criterion for
model validation/selection. Since in general the PA
objective \labelcref{pa_objective} is intractable, we will still use
mean field lower bounds and some upper bounds in
\cite{djolonga14variational} to provide estimations for it.

\textbf{Mean Field Approximation of the Posterior-Agreement Distribution.}
\label{sec_pa_distribution}
A probabilistic log-submodular model is a special case of a Gibbs
random field with unit temperature and $-F(S)$ as the energy
function. In PA framework, we explicitly keep $\beta$ as the inverse
temperature,
$p_{\beta}(S | \data) := \frac{\exp(\beta F(S |
  \data))}{\sum_{\tilde{S} \subseteq \groundset}\exp(\beta F(\tilde S
  | \data))}, \forall S \subseteq \groundset$,
where $\data$ is the dataset used to train the model $F(S\mid \data)$.
The PA distribution is defined as,
\begin{align}\notag 
  p^{\pa}(S)  \propto p_{\beta}(S \mid \data') p_{\beta}(S\mid
  \data'') \propto \exp[ \beta ( F(S| \data') + F(S| \data'')  )]. 
\end{align}
Note that its log partition function is still intractable. In order to
approximate $p^{\pa}(S)$, we use mean field approximation with a
surrogate distribution
$q(S|{\x}):= \prod_{i\in S}x_i \prod_{j\notin S}(1-x_j)$,
 \begin{align}\notag 
   & \log \parti^{\pa} =  \log  \sum\nolimits_{S \subseteq \groundset}
     \exp[ \beta ( F(S| \data') + F(S| \data'')  )] \\\label{pa_elbo} 
   & \geq \beta \ \mathbb{E}_{q(S|{\x})} [F(S|\data')] +\beta \
     \mathbb{E}_{q(S|{\x})} [F(S|\data'')] \\\notag 
     &   \quad   + \sum\nolimits_{i
     \in\groundset} H(x_i).    \quad\quad\qquad\qquad\qquad
     \text{(\textcolor{link_color}{PA-ELBO})} 
 \end{align}
 Maximizing \text{(PA-ELBO)} in \labelcref{pa_elbo} still falls into
 the general problem class \labelcref{opt_problem} (see
 \cref{sec_meanfield_lowerbounds} for details).  Maximizing
 \text{(PA-ELBO)} also serves as a building block for the second
 target below.

\textbf{Lower Bounds for the  \palong\ Objective.}
The \pa\ objective is used to measure the agreement between the two
posterior distributions motivated by an information-theoretic analogy
\citep{buhmannaposterior,bian2016information}. By introducing the same
surrogate distribution $q(S\mid\x)$, one can easily derive that,
\begin{align}\label{pa_objective}
&  \mathrm{log} \sum\nolimits_{S \subseteq \groundset}
  p_{\beta}(S \mid \data') p_{\beta}(S \mid \data'')   \quad
  \text{(PA\  objective)} \\\notag  
&  \!\!\!\!\!\!\geq  \underbrace{\H(q) \!+ \!\beta \ {\E}_q
  F(S|\data') \!+\!\beta \ {\E}_q F(S|\data'')}_{\text{(PA-ELBO)
  in \labelcref{pa_elbo}}}  \\\notag 
&    -\log \parti(\beta; \data') - \log \parti(\beta; \data'') 
\end{align}
where $\H(q)$ is the entropy of $q$,  $\parti(\beta; \data')$ and $\parti(\beta; \data'')$ are the
partition functions of the two noisy distributions, respectively.
In order to find the best lower bound for \pa, one need to maximize
w.r.t. $q(S|\x)$ the (PA-ELBO) objective, at the same time, find the
upper bounds for
$\log \parti(\beta; \data') + \log \parti(\beta; \data'')$.  The
latter can be achieved using techniques from
\citep{djolonga14variational}.  We summarize the details in
\cref{upperbounds_pa} to make it self-contained.

\section{An Optimal Algorithm for Continuous DR-Submodular
  Maximization}

\label{sec_dr_doublegreedy}

Unfortunately, problem \labelcref{opt_problem} is generally hard: 
The $1/2$ hardness result \citep[Proposition 5]{bian2017guaranteed}
can be easily translated to \labelcref{opt_problem} with details
deferred to \cref{supp_hardness}.  The following question arises
naturally: Is it possible to achieve the optimal $1/2$ approximation
ratio (unless RP=NP)
by properly utilizing the extra DR propety in \labelcref{opt_problem}?
To affirmatively answer this question, we propose a new Double Greedy
algorithm for continuous DR-submodular maximization called
\algname{\algname{DR-DoubleGreedy}} and prove a $1/2$ approximation
ratio.

\subsection{A Deterministic $1/2$-Approximation for Continuous DR-Submodular Maximization}

\vspace{-.2cm}
\begin{algorithm}[ht]
	\caption{\algname{DR-DoubleGreedy}$(f, \a, \b)$
        }\label{alg_cont_doublegreedy} 
	\KwIn{ $\max_{\x \in [\a, \b]}f(\x)$,  $f(\x)$ is
          {\color{blue}DR}-submodular,  $[\a,\b]\subseteq  \X$ 
	}
	{$\x^0 \leftarrow \a$,
		$\y^0 \leftarrow \b$\;}
	\For{$k = 1 \rightarrow n$}{
		{ let $v_k$  be the coordinate being operated\;}
		{find
			$ u_a$ \text{ such that }
			$f(\sete{x^{k-1}}{\ele_k}{u_a}) \geq \max_{u'}
                        f(\sete{x^{k-1}}{\ele_k}{ u'}) -
                        \frac{\delta}{n}$, 

			$\delta_a \leftarrow f (\sete{x^{k-1}}{\ele_k}{u_a}) -
			f(\x^{k-1}$)\label{1d_1} \;}

		{find $u_b$ \text{ such that }
			$f(\sete{y^{k-1}}{\ele_k}{u_b})\geq \max_{u'}
                        f(\sete{y^{k-1}}{\ele_k}{u'}) -
                        \frac{\delta}{n}$, 

			$\delta_b \leftarrow f (\sete{y^{k-1}}{\ele_k}{u_b}) -
			f(\y^{k-1})$\label{1d_2} \;}

		{$\x^k \leftarrow \sete{x^{k-1}}{\ele_k}{(
                    \frac{\delta_a}{\delta_a + \delta_b}   u_a +
                    \frac{\delta_b}{\delta_a + \delta_b}
                    u_b)}$\tcp*{update $\ele_k^\text{th}$ coordinate  to be a
                    \emph{\color{blue}convex} combination of $u_a$ \& 
                    $u_b$}} 

		{$\y^k \leftarrow \sete{y^{k-1}}{\ele_k}{(
                    \frac{\delta_a}{\delta_a + \delta_b}   u_a +
                    \frac{\delta_b}{\delta_a + \delta_b}   u_b)}$\;} 

	}
	\KwOut{$\x^n$ or $\y^n \; (\x^n = \y^n)$ }
\end{algorithm}
The pseudocode of \algname{DR-DoubleGreedy} as summarized in
\cref{alg_cont_doublegreedy} describes a one-epoch algorithm, sweeping
over the $n$ coordinates in one pass.
Like the previous Double Greedy algorithms, the procedure maintains
two solutions $\x, \y$, that are initialized as the lower bound $\a$
and the  upper bound $\b$, respectively.  In iteration $k$, it operates on
coordinate $\ele_k$, and solves the two 1-D subproblems
$\max_{u'} f(\sete{x^{k-1}}{\ele_k}{ u'})$ and
$\max_{u'} f(\sete{y^{k-1}}{\ele_k}{u'})$, based on $\x^\pare{k-1}$
and $ \y^\pare{k-1}$, respectively. It also allows   solving 1-D subproblems approximately with 
  additive error $\delta \geq 0$ ($\delta =0$ recovers the error-free case).
Let $u_a$ and $u_b$ be the solutions of these 1-D subproblems.

Unlike previous Double Greedy algorithms, we change coordinate
$\ele_k$ of $\x^\pare{k-1}$ and $ \y^\pare{k-1}$ to be a \emph{convex}
combination of $u_a$ and $u_b$, weighted by respective gains
$\delta_a$, $\delta_b$. This convex combination is the key step that
utilizes the DR property of $f$, and it also plays a crucial role in
the proof.

Note that the 1-D subproblem has a closed-form solution for
ELBO \labelcref{opt_problem_meanfield} (and similarly for
PA-ELBO \labelcref{pa_elbo}).  For coordinate $i$, the partial
derivative of the multilinear extension is $\nabla_i\multi(\x)$, and for
the entropy term, it is $\nabla H(x_i) = \log \frac{1-x_i}{x_i}$. Then $x_i$
should be updated as
$x_i \leftarrow \sigma(\nabla_i \multi(\x)) = \bigl(1+ \exp(-
    \nabla_i \multi(\x)\bigr)^{-1}$,
where $\sigma$ is the logistic sigmoid function.
\begin{restatable}[]{theorem}{restatheoremone}
\label{thm_cont_doublegreedy}
	Assume the optimal solution of  $\max_{\x \in [\a, \b]}f(\x)$  is $\optcont$, then for
	\cref{alg_cont_doublegreedy}    it holds,
	\begin{flalign}
	f(\x^n) \geq \frac{1}{2}f(\optcont) + \frac{1}{4}[f(\a) + f(\b)] - \frac{5\delta}{4}.
	\end{flalign}
\end{restatable}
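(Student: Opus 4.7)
The plan is to follow a Buchbinder--Feldman style analysis of Double Greedy, adapted to the DR-submodular continuous setting, with the convex-combination update playing the role that randomization plays in the classical $1/2$ analysis. Introduce the witness sequence $\text{OPT}^k := (\optcont \vee \x^k) \wedge \y^k$ for $k = 0, 1, \ldots, n$. Since $\a \le \optcont \le \b$ one has $\text{OPT}^0 = \optcont$, and since $\x^n = \y^n$ after every coordinate has been touched one has $\text{OPT}^n = \x^n$. Because $\x^k, \y^k$ agree with their predecessors outside coordinate $\ele_k$, the vectors $\text{OPT}^{k-1}$ and $\text{OPT}^k$ differ only in the $\ele_k$-th entry, where $\text{OPT}^k$ takes the value $z_k := \tfrac{\delta_a}{\delta_a+\delta_b} u_a + \tfrac{\delta_b}{\delta_a+\delta_b} u_b$.

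The core of the argument is the per-iteration inequality
\[
 f(\text{OPT}^{k-1}) - f(\text{OPT}^k) \;\le\; \tfrac{1}{2}\bigl[(f(\x^k)-f(\x^{k-1})) + (f(\y^k)-f(\y^{k-1}))\bigr] + \tfrac{5\delta}{2n}.
\]
Four ingredients go into this bound. (i) DR-submodularity implies coordinate-wise concavity of $f$, so Jensen applied to the convex combination defining $z_k$ lower-bounds $f(\sete{x^{k-1}}{\ele_k}{z_k})$ by a $(\delta_a,\delta_b)$-weighted mixture of $f(\sete{x^{k-1}}{\ele_k}{u_a})$ and $f(\sete{x^{k-1}}{\ele_k}{u_b})$, and symmetrically on the $\y$-side. (ii) DR-submodularity between points forces $u_b \le u_a$, with both in $[x^{k-1}_{\ele_k}, y^{k-1}_{\ele_k}]$, hence $z_k \in [u_b, u_a]$. (iii) $u_a$ and $u_b$ are $\delta/n$-optimal on their respective 1-D subproblems. (iv) DR-submodularity also relates the marginal change of $f$ at $\text{OPT}^{k-1}$ to that at $\x^{k-1}$ or $\y^{k-1}$, depending on whether $\opti{\ele_k} \ge z_k$ or $\opti{\ele_k} \le z_k$.

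Concretely, when $\opti{\ele_k} \ge z_k$, ingredient (iv) with $\x^{k-1} \le \text{OPT}^{k-1}$ bounds $f(\text{OPT}^{k-1}) - f(\text{OPT}^k)$ by $f(\sete{x^{k-1}}{\ele_k}{\opti{\ele_k}}) - f(\sete{x^{k-1}}{\ele_k}{z_k})$; ingredient (iii) replaces $\opti{\ele_k}$ by $u_a$ at additive cost $\delta/n$, and (i)--(ii) together with concavity on $[x^{k-1}_{\ele_k}, u_a]$ collapse the remainder to at most $\tfrac{\delta_a \delta_b}{\delta_a + \delta_b}$. The case $\opti{\ele_k} \le z_k$ is symmetric via $\y^{k-1}$. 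Jensen on the $\x$- and $\y$-sides additionally yields $f(\x^k)-f(\x^{k-1}) \ge \tfrac{\delta_a^2}{\delta_a+\delta_b}$ and $f(\y^k)-f(\y^{k-1}) \ge \tfrac{\delta_b^2}{\delta_a+\delta_b}$, and AM--GM ($2\delta_a\delta_b \le \delta_a^2+\delta_b^2$) then gives $\tfrac{\delta_a\delta_b}{\delta_a+\delta_b} \le \tfrac{1}{2}\bigl[(f(\x^k)-f(\x^{k-1}))+(f(\y^k)-f(\y^{k-1}))\bigr]$, closing the per-iteration bound.

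Summing over $k = 1, \ldots, n$ telescopes the left-hand side to $f(\optcont) - f(\x^n)$ and the right-hand side to $\tfrac{1}{2}[f(\x^n) - f(\a) + f(\y^n) - f(\b)] + \tfrac{5\delta}{2}$; applying $\x^n = \y^n$ and rearranging recovers the stated inequality. The main obstacle is the careful error bookkeeping in the per-iteration step: $\delta_a$ and $\delta_b$ may be as negative as $-\delta/n$ (since ``no change'' is always a valid candidate with gain $0$, they are only approximately nonnegative), the weights $\delta_a/(\delta_a+\delta_b)$ need a separate treatment in the degenerate case $\delta_a+\delta_b \le 0$, and the accumulated slack from approximate optimality, Jensen, and AM--GM must combine to exactly $5\delta/(2n)$ per iteration.
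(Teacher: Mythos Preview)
Your approach matches the paper's: same witness sequence $\intermed^k = (\optcont \vee \x^k) \wedge \y^k$, same per-iteration inequality with constant $2.5\delta/n$, same combination of coordinate-wise concavity (Jensen), DR-submodularity to transfer from $\intermed^{k-1}$ to $\x^{k-1}$ or $\y^{k-1}$, approximate 1-D optimality, and the inequality $2\delta_a\delta_b \le \delta_a^2+\delta_b^2$, followed by telescoping.

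One piece needs repair: ingredient (ii). The ordering $u_b \le u_a$ holds for \emph{exact} 1-D maximizers (via antitonicity of $\nabla_{\ele_k} f$), but the theorem must cover arbitrary $\delta/n$-approximate maximizers, for which $u_b \le u_a$ can fail. You invoke (ii) together with concavity on $[x^{k-1}_{\ele_k},u_a]$ to obtain $f(\sete{x^{k-1}}{\ele_k}{u_b}) \ge f(\x^{k-1})$, which is the inequality you actually need both for the lower bound $f(\x^k)-f(\x^{k-1}) \ge \delta_a^2/(\delta_a+\delta_b)$ and for collapsing the remainder in your Step~II. The paper gets this inequality directly, without any ordering between $u_a$ and $u_b$: since $\x^{k-1} \le \y^{k-1}$ and $u_b \ge a_{\ele_k} = x^{k-1}_{\ele_k}$, DR-submodularity gives
\[
 f(\sete{x^{k-1}}{\ele_k}{u_b}) - f(\x^{k-1}) \;\ge\; f(\sete{y^{k-1}}{\ele_k}{u_b}) - f(\sete{y^{k-1}}{\ele_k}{a_{\ele_k}}) \;\ge\; -\delta/n,
\]
the last step being approximate optimality of $u_b$ on the $\y$-subproblem (and symmetrically for $f(\sete{y^{k-1}}{\ele_k}{u_a}) - f(\y^{k-1})$). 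This is the paper's \cref{lemma1}. Once you substitute this for (ii), the ordering claim becomes unnecessary and your remaining error bookkeeping goes through exactly as in the paper.
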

\textbf{Proof Sketch.}  The high level 
proof strategy is to bound the change of an intermediate variable
$\intermed^\pare{k} := (\optcont \vee \x^k) \wedge \y^k$ through the
course of \cref{alg_cont_doublegreedy}, which is the common framework
in the analysis of all existing Double Greedy variants
\cite{buchbinder2012tight,gottschalk2015submodular,bian2017guaranteed,soma2017non}%
\footnote{Note
  that \cite{buchbinder2012tight} analyzed in the  appendix a Double
  Greedy variant (Alg. 4 therein) for maximizing the multilinear
  extension of a submodular \textit{set} function, which is a special
  case of continuous DR-submodular functions.
 However, that variant 
 cannot be applied for the general DR-submodular 
 objective in \labelcref{opt_problem}; Furthermore,  the analysis 
 for that variant  is not applicable nor generalizable   for  \labelcref{opt_problem}, since it only shows the guarantee wrt. the optimal solution 
 that must be {binary}. While 
 the optimal solution to \labelcref{opt_problem} could be any fractional point
 in $[\a,\b]$.}. 
The novelty of our method results from the update of $\x$, $\y$, which
plays a key role in achieving the optimal $1/2$ approximation ratio.
Furthermore, in the analysis we find a way to utilize the DR property
directly, resulting in a succinct proof.  We document the
details in \cref{supp_proof_dr}, and summarize a sketch here.
Firstly, using DR-submodularity,  we prove that in each iteration, if we were to
flip the 1-D subproblem solutions of $\x$ and $\y$,
it  still does not decrease the function value (in the error-free case $\delta=0$).
\begin{restatable}[]{lemma}{restalemmaone}
\label{lemma1}
		For all $k=1,...,n$, it holds that,
		\begin{align}
			f(\sete{x^\pare{k-1}}{\ele_k}{u_b}) - f(\x^\pare{k-1}) \geq -{\delta}/{n},  \\\notag 
			 f(\sete{y^\pare{k-1}}{\ele_k}{u_a}) - f(\y^\pare{k-1} )  \geq -{\delta}/{n}.
		\end{align}
\end{restatable}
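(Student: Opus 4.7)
The plan is to reduce both inequalities to one-dimensional statements along coordinate $\ele_k$, and then exploit the DR property to compare the one-dimensional slices of $f$ through $\x^{k-1}$ and $\y^{k-1}$. The starting observation is that at the beginning of iteration $k$, the algorithm has only touched coordinates $\ele_1,\dots,\ele_{k-1}$ and has set them to a common value in $\x^{k-1}$ and $\y^{k-1}$; coordinate $\ele_k$ still satisfies $x^{k-1}_{\ele_k} = a_{\ele_k}$ and $y^{k-1}_{\ele_k} = b_{\ele_k}$, and for the remaining coordinates $\x^{k-1}_i = a_i \leq b_i = \y^{k-1}_i$. Hence the invariant $\x^{k-1} \leq \y^{k-1}$ holds, and by the domain of the 1-D subproblems both $u_a$ and $u_b$ lie in $[a_{\ele_k}, b_{\ele_k}]$.

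Define the one-dimensional slices
\[
 g(t) := f(\sete{x^{k-1}}{\ele_k}{t}), \qquad h(t) := f(\sete{y^{k-1}}{\ele_k}{t}), \qquad t \in [a_{\ele_k}, b_{\ele_k}],
\]
and set $D(t) := g(t) - h(t)$. For any $s \leq t$, applying the DR inequality in the direction $\bas_{\ele_k}$ with step $t-s$ to the two comparable anchors $\sete{x^{k-1}}{\ele_k}{s} \leq \sete{y^{k-1}}{\ele_k}{s}$ yields $g(t) - g(s) \geq h(t) - h(s)$, i.e., $D$ is nondecreasing on $[a_{\ele_k}, b_{\ele_k}]$. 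This monotone-discrepancy lemma is the sole place where DR-submodularity is used.

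For the first claim, since $x^{k-1}_{\ele_k} = a_{\ele_k}$, we have $f(\sete{x^{k-1}}{\ele_k}{u_b}) - f(\x^{k-1}) = g(u_b) - g(a_{\ele_k})$. I would split this as
\[
 g(u_b) - g(a_{\ele_k}) = \bigl[D(u_b) - D(a_{\ele_k})\bigr] + \bigl[h(u_b) - h(a_{\ele_k})\bigr].
\]
The first bracket is $\geq 0$ because $u_b \geq a_{\ele_k}$ and $D$ is nondecreasing; the second bracket is $\geq -\delta/n$ by the $\delta/n$-approximate optimality of $u_b$ for $h$ (in particular $h(u_b) \geq h(a_{\ele_k}) - \delta/n$). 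Adding gives $\geq -\delta/n$, as required. The second claim is entirely symmetric: write $h(u_a) - h(b_{\ele_k}) = [D(b_{\ele_k}) - D(u_a)] + [g(u_a) - g(b_{\ele_k})]$; here $D(b_{\ele_k}) \geq D(u_a)$ since $u_a \leq b_{\ele_k}$, while $g(u_a) \geq g(b_{\ele_k}) - \delta/n$ by near-optimality of $u_a$ for $g$.

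There is no real obstacle: once the monotonicity of $D$ is in hand, the two inequalities fall out by a single algebraic split plus the $\delta/n$ slack of the approximate 1-D maximizers. The only book-keeping point is to confirm the endpoint comparisons $u_b \geq a_{\ele_k}$ and $u_a \leq b_{\ele_k}$, which hold because both 1-D subproblems are optimized over $[a_{\ele_k}, b_{\ele_k}]$.
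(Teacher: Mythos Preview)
Your proof is correct and is essentially the same argument as the paper's, just repackaged: the paper applies DR-submodularity once per inequality (e.g., $g(u_b)-g(a_{\ele_k})\geq h(u_b)-h(a_{\ele_k})$ from $\sete{x^{k-1}}{\ele_k}{a_{\ele_k}}\leq \sete{y^{k-1}}{\ele_k}{a_{\ele_k}}$) and then invokes the $\delta/n$-approximate optimality of $u_b$ (resp.\ $u_a$), which is exactly your statement that $D$ is nondecreasing plus the same optimality bound. Your algebraic split $g(u_b)-g(a_{\ele_k})=[D(u_b)-D(a_{\ele_k})]+[h(u_b)-h(a_{\ele_k})]$ is simply a rearrangement of the paper's chain $g(u_b)-g(a_{\ele_k})\geq h(u_b)-h(a_{\ele_k})\geq -\delta/n$, so there is no substantive difference.
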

Then using the new update rule and the DR property, we show that the
loss on intermediate variables
$f( \intermed^\pare{k-1} ) - f(\intermed^\pare{k})$ can be upper
bounded by the increase of the objective value in $\x$ and $\y$ times
$1/2$.

\begin{restatable}[]{lemma}{restalemmatwo}
\label{lemma2}
For all $k=1,...,n$, it holds that,
\begin{align}
  &f( \intermed^\pare{k-1} )	 - f(\intermed^\pare{k}) \\\notag 
  	 \leq
  & \frac{1}{2}   \left[ f(\x^\pare{k}) -f(\x^\pare{k-1} ) + f(\y
  ^\pare{k} ) - f(\y ^\pare{k-1})   \right]  + \frac{2.5 \delta }{n}. 
\end{align}
\end{restatable}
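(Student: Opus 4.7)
The plan is to reduce $f(\intermed^{k-1})-f(\intermed^k)$ to a one-coordinate quantity, exploit DR-submodularity in two separate ways, and then let the algorithm's convex-combination weight $\lambda:=\delta_a/(\delta_a+\delta_b)$ balance the resulting case bounds.

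First I would establish, by induction on $k$, that $\x^{k-1}$ and $\y^{k-1}$ agree on every coordinate $v_j$ with $j<k$ and satisfy $\x^{k-1}_{v_j}=a_{v_j}$, $\y^{k-1}_{v_j}=b_{v_j}$ for $j\geq k$. Consequently $\x^{k-1}\leq \intermed^{k-1}\leq \y^{k-1}$, and a short computation of the $\vee,\wedge$ composition shows that $\intermed^{k-1}_{v_k}=o:=\optcont_{v_k}$ while $\intermed^k_{v_k}=w$, with all other coordinates of $\intermed$ unchanged between steps $k-1$ and $k$. Writing $g,g_x,g_y$ for the one-dimensional slices of $f$ along coordinate $v_k$ based at $\intermed^{k-1}$, $\x^{k-1}$, $\y^{k-1}$ respectively, the target inequality becomes a statement about $L:=g(o)-g(w)$ versus $A:=g_x(w)-g_x(a_{v_k})$ and $B:=g_y(w)-g_y(b_{v_k})$.

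Next I would extract two consequences of DR. First, since the three bases are ordered and agree on coordinates $v_j$ with $j<k$, applying DR with a $v_k$-increment yields the sandwich
\begin{equation*}
 g_y(t)-g_y(s)\;\leq\;g(t)-g(s)\;\leq\;g_x(t)-g_x(s),\qquad a_{v_k}\leq s\leq t\leq b_{v_k}.
\end{equation*}
Second, applying DR to two bases that differ only in coordinate $v_k$ shows each of $g,g_x,g_y$ is concave on $[a_{v_k},b_{v_k}]$; combined with the $\delta/n$-approximate optimality of $u_a$ for $g_x$ and of $u_b$ for $g_y$, this also places $u_b\leq w\leq u_a$. I would then split on the sign of $o-w$: when $o\leq w$, the sandwich gives $L\leq g_y(o)-g_y(w)$, which concavity of $g_y$ and the approximate optimality of $u_b$ bound by a linear combination of $B$, $(w-o)/(b_{v_k}-w)$, and $\delta/n$; the case $o\geq w$ is handled symmetrically using $g_x$, $A$ and $(o-w)/(w-a_{v_k})$.

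The key observation is that the two case-bounds collapse to $\tfrac{1}{2}(A+B)$ precisely because $w=\lambda u_a+(1-\lambda)u_b$ with $\lambda=\delta_a/(\delta_a+\delta_b)$: concavity of $g_x,g_y$ together with Lemma 1 gives $A\geq \lambda\delta_a-(1-\lambda)\delta/n$ and $B\geq (1-\lambda)\delta_b-\lambda\delta/n$, and substituting these into the case-bounds yields $L\leq \tfrac{1}{2}(A+B)+O(\delta/n)$. The main obstacle will be the tight error accounting: each of the two calls to approximate optimality of $u_a,u_b$ contributes a $\delta/n$ slack, each of the two applications of Lemma 1 contributes another $\delta/n$, and the balancing step supplies a further $\delta/(2n)$, so the advertised constant $\tfrac{5\delta}{2n}$ leaves essentially no slack; I would therefore first prove the $\delta=0$ version with exact maximizers to keep the algebra transparent and only afterwards reintroduce the $\delta$-errors source by source.
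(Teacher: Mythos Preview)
Your high-level plan coincides with the paper's: reduce to one coordinate, use the DR sandwich $g_y(t)-g_y(s)\le g(t)-g(s)\le g_x(t)-g_x(s)$, split on the sign of $o-w$, and lower-bound $A,B$ via coordinate concavity plus Lemma~1 to get $A\ge\lambda\delta_a-(1-\lambda)\delta/n$ and $B\ge(1-\lambda)\delta_b-\lambda\delta/n$. Two details diverge from the paper. First, the ordering $u_b\le w\le u_a$ is not needed and can fail once $\delta>0$ (nothing in the algorithm forces $u_b\le u_a$); the argument goes through without it because concavity already gives $g_y(w)\ge \lambda g_y(u_a)+(1-\lambda)g_y(u_b)$ for any convex combination. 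Second, and more substantively, bounding $g_y(o)-g_y(w)$ via the secant ratio $(w-o)/(b_{v_k}-w)$ is a detour that will not close: that ratio depends on the unknown $o=x^\star_{v_k}$ and is not controlled by $\lambda$, so it does not by itself collapse to $\tfrac12(A+B)$.

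The paper instead uses the concavity expansion at $w$ directly,
\[
g_y(o)-g_y(w)\;\le\;\lambda\bigl(g_y(o)-g_y(u_a)\bigr)+(1-\lambda)\bigl(g_y(o)-g_y(u_b)\bigr),
\]
after which the selection rule for $u_b$ bounds the second bracket by $\delta/n$, and the selection rule together with Lemma~1 bounds the first by $\delta_b+2\delta/n$, giving $L\le \lambda\delta_b+2\delta/n=\tfrac{\delta_a\delta_b}{\delta_a+\delta_b}+2\delta/n$; the case $o>w$ symmetrically yields $(1-\lambda)\delta_a+2\delta/n$, the same quantity. Comparing with $\tfrac12(A+B)\ge\tfrac12\,\tfrac{\delta_a^2+\delta_b^2}{\delta_a+\delta_b}-\tfrac{\delta}{2n}$ and the elementary inequality $2\delta_a\delta_b\le\delta_a^2+\delta_b^2$ then produces exactly the $2.5\delta/n$ slack. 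If you replace the secant-ratio step with this concavity expansion at $w$, your plan becomes the paper's proof.
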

Given  \cref{lemma2}, let
	us sum for $k = 1,...,n$.
	After   rearrangement it reaches the final conclusion.

\subsection{Multi-epoch Extensions
}
Though \algname{DR-DoubleGreedy} reaches the optimal
$1/2$ guarantee with  one epoch, in practice
it usually helps to use its output as an initializer,
and continue optimizing coordinate-wisely  for
additional epochs. Since each step of  coordinate update
will never decrease the function value, the
approximation guarantees will hold.
We call this class of algorithms \algname{DoubleGreedy-MeanField},
abbreviated as \algname{DG-MeanField}, and summarize the pseudocode in
\cref{alg_dg_meanfield}.
\begin{algorithm}[htbp]
	\caption{\algname{DG-MeanField-$1/2$} \& \algname{DG-MeanField-$1/3$} }\label{alg_dg_meanfield}
	\KwIn{ $\max_{\x \in [\a, \b]}f(\x)$,  e.g., from
	the ELBO \labelcref{opt_problem_meanfield} or  PA-ELBO \labelcref{pa_elbo} objective
	}
	{Option I: \algname{DG-MeanField-$1/3$}: run \algname{Submodular-DoubleGreedy} \citep{bian2017guaranteed} to get a ${1}/{3}$ initializer $\hat\x$}
	
	{Option II: \algname{DG-MeanField-$1/2$}: run \algname{DR-DoubleGreedy} to get a $1/2$ initializer $\hat\x$  \;}
	{ beginning with $\hat\x$, optimize $f(\x)$ coordinate by coordinate for $T$ epochs \;}
\end{algorithm}
\vspace{-.4cm}

\section{Efficient Methods  for Calculating Multilinear Extension \&  Gradients}
\label{sec_computing_multilinear}

In this section we present guaranteed methods to efficiently calculate
the multilinear extension $\multi (\x)$ and its gradients in
polynomial time\footnote{\cite{iyer2015submodular} give closed-form
  expressions for the partition functions of submodular point
  processes for several classes of objectives, which can be treated as
  the multilinear extensions evaluated at $0.5*\one$ with proper
  scaling.}.  Remember that the multilinear extension is the expected
value of $F(S)$ under the surrogate distribution:
$\multi(\x) := \E_{q(S\mid \x)} [F(S)] =\sum_{S\subseteq
  \groundset}F(S)\prod_{i\in S}x_i \prod_{j\notin S}(1-x_j).$
One can verify that the partial derivative of $\multi(\x)$ is,
\begin{align}\notag 
\nabla_i \multi(\x)  & 
  = \E_{q(S\mid \x, x_i = 1)} [F(S)]  - \E_{q(S\mid \x, x_i=0)} [F(S)]\\\notag 
  & =\multi(\sete{x}{i}{1}) - \multi(\sete{x}{i}{0})\\\notag  
                    &  =  \sum_{S\subseteq \groundset, S\ni i } F(S)
                       \prod_{j \in S\backslash\{i\}}x_j
                       \prod_{j'\notin S}(1-x_{j'})\\\notag 
                & \quad  - \sum_{S\subseteq
                       \groundset\backslash \{i\} }\ F(S) \prod_{j\in
                       S} x_j \prod_{j'\notin S, j'\neq i}
                       (1-x_{j'}). 
\end{align}

\subsection{Gibbs Random Fields with Finite Order of Interactions}\label{gibbs_multilinear}

Let us use $\v \in \{0,1\}^\groundset  $ to equivalently 
denote the $n$ binary random variables. 
$F(\v)$ corresponds to the negative energy function 
in Gibbs random fields. If the energy function is
parameterized with a finite order of interactions, i.e., 
$F(\v) = \sum_{s\in \groundset} \theta_s v_s + \sum_{(s,t)\in
  \groundset \times \groundset} \theta_{s, t}v_s v_t + ... +
\sum_{(s_1, s_2, ..., s_d)} \theta_{s_1, s_2, ..., s_d}v_{s_1} \cdots
v_{s_d},  \; d< \infty$, then one can verify that its
multilinear extension has the following closed form,
\begin{align}
\multi(\x)
= \sum_{s\in \groundset} \theta_s x_s + \sum_{(s,t)\in \groundset
  \times \groundset} \theta_{s, t}x_s x_t + ...  \\\notag 
+  \sum_{(s_1, s_2,
  ..., s_d)} \theta_{s_1, s_2, ..., s_d}x_{s_1} \cdots  x_{s_d}\,.
\end{align}
The gradient of this expression can also be easily derived.  Given
this observation, one can quickly  derive the multilinear extensions
of a large category of energy functions of Gibbs random fields, e.g.,
graph cut, hypergraph cut, Ising models, etc.  Details are in
\cref{supp_multilinear_closedform}.

\subsection{Facility Location \& FLID (Facility Location Diversity)}

FLID is a diversity model \citep{Tschiatschek16diversity} that has
been designed as a computationally efficient alternative to DPPs.  It
is in a more general form than facility location.  Let
$\BW\in \R_+^{|\groundset|\times D }$ be the weights, each row
correponds to the latent representation of an item, with  $D$ as the
dimensionality. Then
\begin{align}\notag
  F(S) := & \sum\nolimits_ {i\in S} u_i + \sum\nolimits_{d=1}^{D} (
            \max_{i\in S} W_{i,d} - \sum\nolimits_{i\in S} W_{i,d} ) \\ 
=& \sum\nolimits_ {i\in S} u'_i + \sum\nolimits_{d=1}^{D}
            \max_{i\in S}W_{i,d}, 
\end{align}
which models both coverage and diversity, and
$u'_i = u_i - \sum_{d=1}^{D} W_{i,d}$. If $u'_i=0$, one recovers the
facility location objective.
The computational complexity of evaluating its partition function is
$\bigo{|\groundset|^{D+1}}$ \citep{Tschiatschek16diversity}, which is
exponential in terms of $D$.

We now show the technique such that $\multi(\x)$ and
$\nabla_i \multi(\x) $ can be evaluated in $\bigo{Dn^2}$ time.
Firstly, for one $d\in [D]$, let us sort $W_{i,d}$ such that
$W_{i_d(1), d} \leq W_{i_d(2), d} \leq \cdots \leq W_{i_d(n), d} $.
After this sorting, there are $D$ permutations to record:
$i_d(l), l=1,...,n, \forall d\in [D]$.  Now, one can verify that,
\begin{align}\notag 
  & \multi(\x) \\\notag 
   & =  \sum_ {i \in [n]} u'_i x_i +  \sum_d
                \sum_{S\subseteq \groundset }  \max_{i \in S} W_{i, d}
                \prod_{m\in S}x_m \prod_{m'\notin S}(1-x_{m'}) \\\notag 
              & = \sum_ {i\in [n]} u'_i x_i + \sum_{d} \sum_{l=1}^n
                W_{i_d(l), d} x_{i_d(l)} \prod_{m=l+1}^n [1-
                x_{i_d(m)}].  
\end{align}
Sorting costs $\bigo{Dn\log n}$, and from the above expression, one
can see that the cost of evaluating $\multi(\x)$ is $\bigo{Dn^2}$. By the
relation that
$\nabla_i \multi(\x) = \multi(\sete{x}{i}{1}) -
\multi(\sete{x}{i}{0})$,
the cost is also $\bigo{Dn^2}$.  For $\nabla_i \multi(\x)$, there exists a
refined way to calculate this derivative, which we explain in
\cref{supp_multilinear_closedform}.

\subsection{Set Cover Functions}
\label{supp_setcover}
Suppose there are $|C| = \{c_1, ...,c_{|C|}\}$ concepts, and $n$ items
in $\groundset$. Give a set $S\subseteq \groundset$, $\Gamma (S)$
denotes the set of concepts covered by $S$. Given a modular function
$\m: 2^C \mapsto \R_+ $, the set cover function is defined as
$F(S) = \m (\Gamma(S))$.
This function models coverage in maximization,
and also the notion of complexity in minimization problems \citep{lin2011optimal}. 
Let us define an inverse map $\Gamma^{-1}$, such that for 
each concept $c$, $\Gamma^{-1}(c)$ denotes the set 
of items $v$ such that $\Gamma^{-1}(c) \ni v$. So the 
multilinear extension is,
\begin{align}\notag 
\multi(\x)  & =  \sum\nolimits_{i \in \groundset}  \m (\Gamma(S))  \prod\nolimits_{m\in S}x_m \prod\nolimits_{m'\notin S }(1-x_{m'}) \\
 & =  \sum\nolimits_ {c\in C}  m_c \left[  1- \prod\nolimits_{ i\in \Gamma^{-1}(c) }  (1- x_i) \right].
\end{align}
The last equality is achieved by considering the situations
where a concept $c$ is covered.
One can observe that both $\multi(\x)$ and $\nabla_i \multi(\x) $ can
be evaluated in $\bigo{n|C|}$ time.

\subsection{General Case: Approximation by Sampling}

In the most general case, one may only have access to the function values of $F(S)$. 
In this scenario, one can use a polynomial number of sample steps to estimate
$\multi(\x)$ and its gradients. Specifically: 1) Sample $k$ times
$S \sim q(S|\x)$ and evaluate function values for them, resulting in
$F(S_1), ...,F(S_k)$.  2) Return the average
$\frac{1}{k}\sum_{i=1}^{k} F(S_i)$. According to the Hoeffding bound
\citep{hoeffding1963probability}, one can easily derive that
$\frac{1}{k}\sum_{i=1}^{k} F(S_i)$ is arbitrarily close to
$\multi(\x)$ with increasingly more samples: With probability at least
$1- \exp(-k\epsilon^2/2)$, it holds that
$|\frac{1}{k}\sum_{i=1}^{k} F(S_i) - \multi(\x)| \leq \epsilon \max_S
|F(S)| $, for all $\epsilon > 0$.

\section{Experiments}
\label{sec_exp}

The objectives under investigation are
ELBO \labelcref{opt_problem_meanfield} and
PA-ELBO \labelcref{pa_elbo} (We set  $\beta = 1$ in PA-ELBO).  We tested on the representative FLID
model on the following algorithms and baselines: 

The first category is
one-epoch algorithms, including \textcircled{1}
\algname{Submodular-DoubleGreedy} from \cite{bian2017guaranteed} with
$1/3$ guarantee,  \textcircled{2} \algname{BSCB}  (Algorithm 4 in \cite{niazadeh2018optimal}, termed Binary-Search Continuous Bi-greedy,
where we chose $\epsilon=10^{-3}$) with $1/2$ guarantee 
and \textcircled{3} \algname{DR-DoubleGreedy}
(\cref{alg_cont_doublegreedy}) with $1/2$ guarantee.

The second category contain multiple-epoch algorithms: \textcircled{4}
 \algname{CoordinateAscent-0}: initialized as $\zero$ and coordinate-wisely improving  the solution;
\algname{CoordinateAscent-1}: initialized as $\one$;
\algname{CoordinateAscent-Random}: initialized as a uniform  vector $U(\zero, \one)$.
\textcircled{5} \algname{\dgmf-${1}/{3}$}.  \textcircled{6}
\algname{\dgmf-${1}/{2}$} from \cref{alg_dg_meanfield}.
\textcircled{7} \algname{BSCB-Multiepoch}, which is the
multi-epoch extension of \algname{BSCB}: After the first
epoch of running \algname{BSCB}, it continues to improve the solution coordinate-wisely. For all algorithms, we use the same random order to process
the coordinates within each  epoch.

{
	\begin{table}
		\begin{center}
		\footnotesize
			\caption{Summary of results on ELBO objective \labelcref{opt_problem_meanfield} and PA-ELBO objective \labelcref{pa_elbo}. \algname{Sub-DG} stands for  \algname{Submodular-DoubleGreedy}, \algname{DR-DG} stands for  \algname{DR-DoubleGreedy}. Boldface numbers indicate the best mean of function values returned. For ELBO,  the mean and standard deviation were calculated
				for 10 FLID models trained on 10 folds of the data. For PA-ELBO, the mean and standard deviation were calculated  for models trained over $45$ pairs of folds. More details are in the experimental section.}
			\label{tab_summ_elbo}
			\begin{tabular}{|c|c||c|c|c||c|c|c|}
				\cline{1-8} 
				&  &  \multicolumn{3}{|c||}{\textbf{ELBO} objective \labelcref{opt_problem_meanfield}} &  \multicolumn{3}{|c|}{\textbf{PA-ELBO} objective \labelcref{pa_elbo}}  \\
				\cline{1-8}
				Category	& $D$ &  \algname{Sub-DG} & \algname{BSCB} & \algname{DR-DG} &  \algname{Sub-DG} & \algname{BSCB} & \algname{DR-DG} \\
				\hline
				\hline
				\multirow{2}{*}{furniture}  & 2  & 2.078$\pm$0.091 & 2.771$\pm$0.123  & \textbf{3.035}$\pm$0.059  & 0.918$\pm$0.768 & 2.287$\pm$0.399  & \textbf{2.402}$\pm$0.159\\
				& 3  & 1.835$\pm$0.156 & 2.842$\pm$0.128  & \textbf{3.026}$\pm$0.099 & 1.296$\pm$1.176 & 2.536$\pm$0.439  & \textbf{2.693}$\pm$0.181 \\
				$n$=32 & 10  & 1.375$\pm$0.194 & \textbf{2.951}$\pm$0.161  & {2.917}$\pm$0.103 & 1.504$\pm$1.110 & 2.764$\pm$0.405  & \textbf{2.882}$\pm$0.248 \\
				\hline
				\multirow{2}{*}{carseats}  & 2  & 2.089$\pm$0.166 & 2.863$\pm$0.090  & \textbf{3.045}$\pm$0.069 & 1.015$\pm$1.081 & 2.106$\pm$0.228  & \textbf{2.348}$\pm$0.219\\
				& 3  & 1.890$\pm$0.146 & 3.003$\pm$0.110  & \textbf{3.138}$\pm$0.082 & 1.309$\pm$1.218 & 2.414$\pm$0.267  & \textbf{2.707}$\pm$0.208 \\
				$n$=34 & 10  & 1.390$\pm$0.232 & \textbf{3.100}$\pm$0.140  & {3.003}$\pm$0.157 & 1.599$\pm$1.317 & 2.684$\pm$0.271  & \textbf{2.915}$\pm$0.250 \\
				\hline
				\multirow{2}{*}{safety}  & 2  & 1.934$\pm$0.402 & 2.727$\pm$0.212  & \textbf{2.896}$\pm$0.098 & 1.370$\pm$1.203 & 2.049$\pm$0.280  & \textbf{2.341}$\pm$0.161\\
				& 3  & 1.867$\pm$0.453 & 2.830$\pm$0.191  & \textbf{2.970}$\pm$0.110 & 1.706$\pm$1.296 & 2.288$\pm$0.297  & \textbf{2.619}$\pm$0.167 \\
				$n$=36 & 10  & 1.546$\pm$0.606 & 2.916$\pm$0.191  & \textbf{2.920}$\pm$0.149 & 1.948$\pm$1.353 & 2.467$\pm$0.270  & \textbf{2.738}$\pm$0.187 \\
				\hline
				\multirow{2}{*}{strollers}  & 2  & 2.042$\pm$0.181 & 2.829$\pm$0.144  & \textbf{2.928}$\pm$0.060 & 0.865$\pm$0.952 & 1.933$\pm$0.256  & \textbf{2.202}$\pm$0.226\\
				& 3  & 1.814$\pm$0.264 & 2.958$\pm$0.146  & \textbf{2.978}$\pm$0.077 & 1.172$\pm$1.063 & 2.181$\pm$0.297  & \textbf{2.543}$\pm$0.254 \\
				$n$=40 & 10  & 1.328$\pm$0.544 & \textbf{3.065}$\pm$0.162  & 2.910$\pm$0.140 & 1.702$\pm$1.334 & 2.480$\pm$0.304  & \textbf{2.767}$\pm$0.336 \\
				\hline
				\multirow{2}{*}{media}  & 2  & 3.221$\pm$0.066 & 3.309$\pm$0.055  & \textbf{3.493}$\pm$0.051 & 0.372$\pm$0.286 &\textbf{ 1.477}$\pm$0.128  & 1.336$\pm$0.101\\
				& 3  & 3.276$\pm$0.082 & 3.492$\pm$0.083  & \textbf{3.712}$\pm$0.079 & 0.418$\pm$0.366 & 1.736$\pm$0.177  & \textbf{1.762}$\pm$0.095 \\
				$n$=58 & 10  & 2.840$\pm$0.183 & 3.894$\pm$0.122  & \textbf{3.924}$\pm$0.114 & 0.653$\pm$0.727 & 2.309$\pm$0.244  & \textbf{2.524}$\pm$0.130 \\
				\hline
				\multirow{2}{*}{health}  & 2  & 3.197$\pm$0.067 & 3.174$\pm$0.074  & \textbf{3.516}$\pm$0.043 & 0.548$\pm$0.282 & \textbf{1.655}$\pm$0.122  & 1.650$\pm$0.073\\
				& 3  & 3.231$\pm$0.055 & 3.306$\pm$0.108  & \textbf{3.707}$\pm$0.064 & 0.649$\pm$0.413 & 1.903$\pm$0.173  & \textbf{2.025}$\pm$0.083 \\
				$n$=62 & 10  & 2.633$\pm$0.115 & 3.508$\pm$0.120  & \textbf{3.675}$\pm$0.110 & 0.768$\pm$0.628 & 2.233$\pm$0.196  & \textbf{2.375}$\pm$0.101 \\
				\hline
				\multirow{2}{*}{toys}  & 2  & 3.543$\pm$0.047 & 3.454$\pm$0.091  & \textbf{3.856}$\pm$0.044 & 0.597$\pm$0.480 & 1.731$\pm$0.182  & \textbf{{1.761}}$\pm$0.133\\
				& 3  & 3.362$\pm$0.055 & 3.412$\pm$0.070  & \textbf{3.736}$\pm$0.051 & 0.578$\pm$0.520 & 1.738$\pm$0.192  & \textbf{1.802}$\pm$0.151 \\
				$n$=62 & 10  & 3.037$\pm$0.138 & 3.706$\pm$0.108  & \textbf{3.859}$\pm$0.119 & 0.758$\pm$0.871 & 2.140$\pm$0.242  & \textbf{2.330}$\pm$0.177 \\
				\hline
				\multirow{2}{*}{diaper}  & 2  & 3.500$\pm$0.058 & 3.517$\pm$0.058  & \textbf{3.636}$\pm$0.043 & 0.295$\pm$0.158 & \textbf{1.119}$\pm$0.063  & 0.665$\pm$0.116\\
				& 3  & 3.739$\pm$0.080 & 3.753$\pm$0.065  & \textbf{3.974}$\pm$0.065 & 0.337$\pm$0.240 & \textbf{1.429}$\pm$0.111  & 1.141$\pm$0.120 \\
				$n$=100 & 10  & 3.423$\pm$0.110 & 4.150$\pm$0.120  & \textbf{4.203}$\pm$0.086 & 0.386$\pm$0.504 & 1.969$\pm$0.201  & \textbf{2.009}$\pm$0.199 \\
				\hline
				\multirow{2}{*}{feeding}  & 2  & 3.942$\pm$0.041 & 3.808$\pm$0.024  & \textbf{3.970}$\pm$0.036 & 0.393$\pm$0.034 & \textbf{0.894}$\pm$0.022  & 0.501$\pm$0.029\\
				& 3  & 4.333$\pm$0.031 & 4.095$\pm$0.032  & \textbf{4.390}$\pm$0.031 & 0.503$\pm$0.072 & \textbf{1.232}$\pm$0.041  & 0.893$\pm$0.046 \\
				$n$=100 & 10  & 4.611$\pm$0.053 & 4.553$\pm$0.079  & \textbf{4.860}$\pm$0.056 & 0.608$\pm$0.239 & 1.808$\pm$0.087  & \textbf{1.820}$\pm$0.078 \\
				\hline
				\multirow{2}{*}{gear}  & 2  & 3.311$\pm$0.046 & 3.150$\pm$0.037  & \textbf{3.430}$\pm$0.040 & 0.232$\pm$0.068 & \textbf{1.019}$\pm$0.048  & 0.590$\pm$0.043\\
				& 3  & 3.538$\pm$0.048 & 3.347$\pm$0.045  & \textbf{3.721}$\pm$0.050 & 0.303$\pm$0.132 & \textbf{1.257}$\pm$0.085  & 1.020$\pm$0.064 \\
				$n$=100 & 10  & 3.065$\pm$0.083 & 3.550$\pm$0.050  & \textbf{3.670}$\pm$0.067 & 0.312$\pm$0.232 & \textbf{1.566}$\pm$0.130  & 1.514$\pm$0.072 \\
				\hline
				\multirow{2}{*}{bedding}  & 2  & 3.406$\pm$0.080 & 3.374$\pm$0.088  & \textbf{3.620}$\pm$0.062 & 0.525$\pm$0.121 & 1.932$\pm$0.194  & \textbf{2.001}$\pm$0.080\\
				& 3  & 3.648$\pm$0.106 & 3.564$\pm$0.083  & \textbf{3.876}$\pm$0.081 & 2.499$\pm$0.972 & 2.250$\pm$0.269  & \textbf{2.624}$\pm$0.066 \\
				$n$=100 & 10  & 3.355$\pm$0.161 & 3.799$\pm$0.144  & \textbf{3.912}$\pm$0.082 & \textbf{3.919}$\pm$0.045 & 2.578$\pm$0.358  & {3.157}$\pm$0.091 \\
				\hline
				\multirow{2}{*}{apparel}  & 2  & 3.560$\pm$0.094 & 3.527$\pm$0.046  & \textbf{3.784}$\pm$0.059 & 0.268$\pm$0.109 & \textbf{1.552}$\pm$0.141  & 1.513$\pm$0.191 \\
				& 3  & 3.878$\pm$0.092 & 3.755$\pm$0.062  & \textbf{4.140}$\pm$0.063 & 0.490$\pm$0.677 & 1.900$\pm$0.237  & \textbf{2.225}$\pm$0.136\\
				$n$=100 & 10  & 3.751$\pm$0.087 & 4.084$\pm$0.075  & \textbf{4.425}$\pm$0.066 & 0.820$\pm$1.372 & 2.351$\pm$0.337  & \textbf{2.967}$\pm$0.150 \\
				\hline
				\multirow{2}{*}{bath}  & 2  & 2.957$\pm$0.087 & 3.024$\pm$0.032  & \textbf{3.198}$\pm$0.056 & 0.197$\pm$0.090 & \textbf{1.101}$\pm$0.083  & 0.795$\pm$0.078\\
				& 3  & 3.062$\pm$0.085  & 3.195$\pm$0.058  & \textbf{3.448}$\pm$0.058 & 0.247$\pm$0.163 & \textbf{1.368}$\pm$0.134  & 1.269$\pm$0.059 \\
				$n$=100 & 10  & 2.497$\pm$0.135 & 3.426$\pm$0.076  & \textbf{3.438}$\pm$0.089 & 0.327$\pm$0.312 & 1.711$\pm$0.183  & \textbf{1.742}$\pm$0.098 \\
				\hline
			\end{tabular}
		\end{center}
	\end{table}
} %

We are trying to understand:
1)  In terms of continuous DR-submodular maximization,   how good are  the  solutions returned by one-epoch
algorithms?
2)
How good are the realized lower bounds?
  For small scale problems 
  we can calculate the true
  log-partitions exhaustively, which servers as
  a natural upper bound of ELBO.
All algorithms and subroutines are implemented in Python3, and source
code will be released soon.

\setkeys{Gin}{width=0.9\textwidth}
\begin{figure}[htb]
	\small
	\center 
	\vspace{-.2cm}
	\includegraphics[]{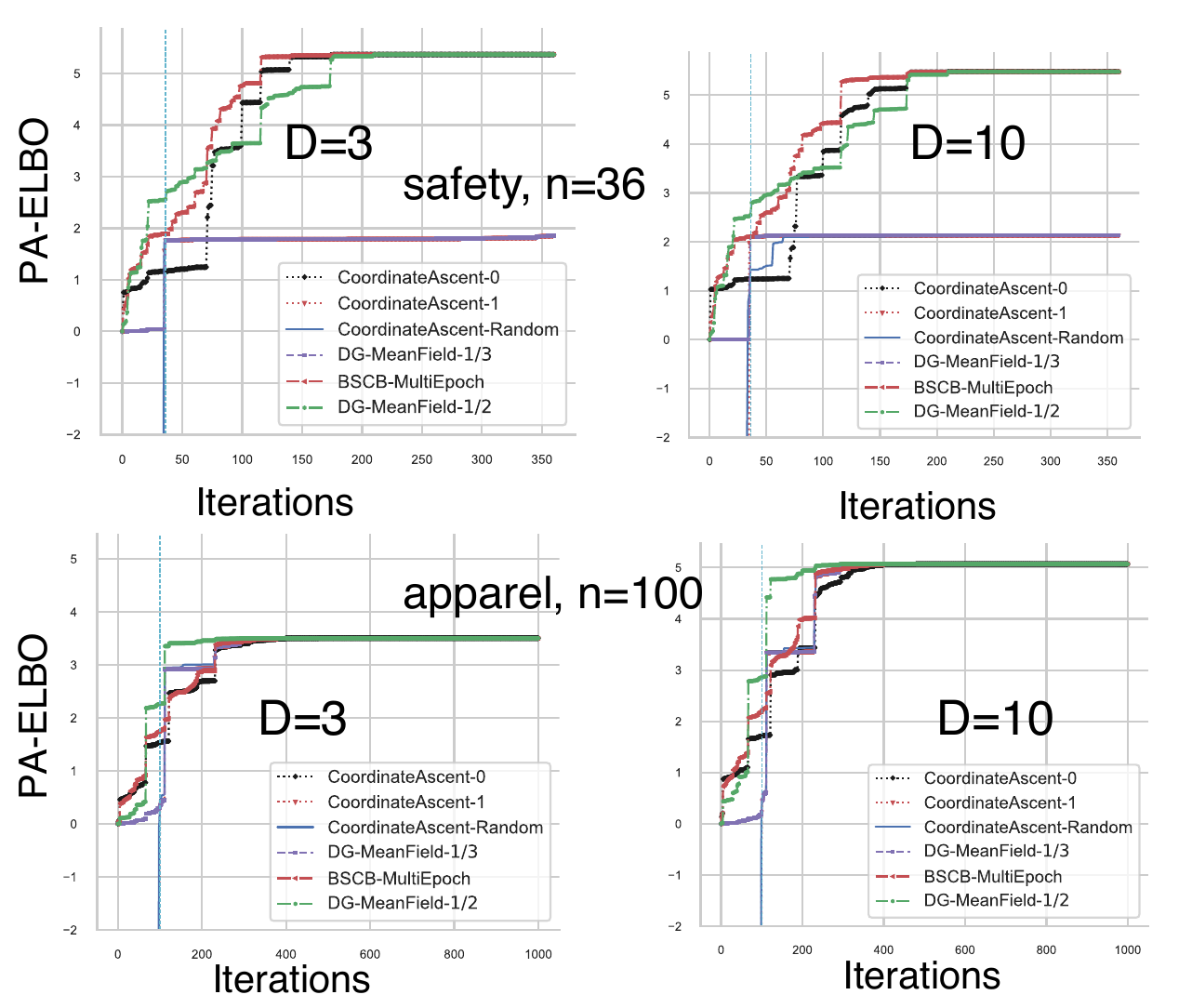}
	\caption{PA-ELBO  on Amazon data.  The figures  trace trajectories of multi-epoch algorithms. Cyan vertical line shows the one-epoch point.
	}
	\label{PA-ELBO_safety}
	\vspace{-.1cm}
\end{figure}
\textbf{Real-world Dataset. }
We  tested the mean field methods on the
trained FLID models from \cite{Tschiatschek16diversity} on Amazon Baby Registries
dataset.
After preprocessing, this dataset
has 13 categories, e.g., ``feeding" \& ``furniture".
One category contains a certain number of registries
over the ground set of this category, e.g., ``strollers" has
5,175 registries with $n=40$.
One can refer to \cref{tab_summ_elbo} for specific dimensionalities
on each of the category\footnote{More details on this dataset can be found in  \citet{gillenwater2014expectation}.}.
For each category, three
classes of
models were trained, with latent dimensions $D= 2, 3, 10$, repectively, on
10 folds of the data.

\subsection{Results on One-epoch Algorithms}

\cref{tab_summ_elbo}  summarizes the outputs
of one-epoch algorithms for both ELBO and PA-ELBO
objectives. For each category, the results
of FLID models with three dimensionalities ($D= 2, 3, 10$)
are reported. 

\paragraph{ELBO Objective.}
The results are summarized in columns 3 to 5 in \cref{tab_summ_elbo}.
 The mean and standard deviation are calculated
for 10 FLID models trained on 10 folds of the data.
One can observe that both \algname{DR-DoubleGreedy} and \algname{BSCB}
improve over the baseline \algname{Submodular-DoubleGreedy}, which has only a 1/3 approximation guarantee.
Furthermore, \algname{DR-DoubleGreedy} generates better solutions
than \algname{BSCB} for almost all of the cases, though they have the same
approximation guarantee.

\paragraph{PA-ELBO objective.}

The results are summarized in columns 6 to 8 in \cref{tab_summ_elbo}.
For each category, out of the 10 folds of data,
we have ${10\choose  2} = 45$ pairs of folds.
The mean and standard deviation are computed 
for these $45$ pairs for each category and each
latent dimensonality $D$.
One can still observe that \algname{DR-DoubleGreedy} and \algname{BSCB} significantly 
improve over \algname{Submodular-DoubleGreedy}. 
Moreover, \algname{DR-DoubleGreedy}  produces better
solutions than \algname{BSCB} in most of the experiments.

\subsection{Results on Multi-epoch Algorithms}

\paragraph{PA-ELBO Objective.}

\cref{PA-ELBO_safety} shows representative
results on PA-ELBO objectives.
One can see that after one epoch, \algname{\dgmf-$1/2$}
almost always returns the best solution.
In most of the experiments, \algname{\dgmf-$1/2$}  was the
fastest algorithm to converge.
However, \algname{\nmf} is quite sensitive to initializations.
After sufficiently many iterations, most  multi-epoch
algorithms converge to similar ELBO values.
This is consistent with the intuition since after one epoch,
all algorithms are using the same strategy: conducting
coordinate-wise maximization. However, for \algname{\nmf} with unlucky initializations, e.g., for category ``safety" (row 1), it may get stuck in poor
local optima. 

The results on ELBO objectives are put 
into \cref{supp_elbo}.

\section{Conclusions}
Probabilistic structured models play an eminent
role in machine learning today, especially models with submodular
costs. Validating such models and their parameters remains an open
issue in applications. We have proposed provable mean field algorithms
for probabilistic log-submodular models and their posterior agreement
score. A novel Double Greedy scheme with optimal $1/2$ approximation
ratio for the general problem of box-constrained continuous
DR-submodular maximization has been proposed and analyzed on
real-world data. 
We plan to generalize the guaranteed mean field
approaches to probabilistic graphical models with a larger class of
energy functions.

\acks{YAB would like to thank Nico Gorbach and Josip Djolonga for fruitful discussions, to thank Sebastian Tschiatschek  for sharing the source code and data.
This research was partially supported by the Max Planck ETH Center for Learning System}

\clearpage
%\section*{Ref}
{
%\small 
\bibliography{\bibpath}
}

\clearpage 
\onecolumn
\appendix
\appendixtitle{Appendix}

\section{There Exist Poor Local Optima}
\label{sec_bad_localoptima}

If one  only assume the objective function $f(\x)$
to be continuous DR-submodular, and considering that 
the multilinear extension of a submodular set function 
is continuous DR-submodular, we can take the 
examples from literatures on combinatorial optimization, e.g., 
\cite{feige2011maximizing}, 
to show that bad  local optima exist. 

Here we provide a \textit{stronger} example,
where we  assume that the objective function $f(\x)$
 has the 
same structure as the ELBO objective \labelcref{opt_problem_meanfield}. And still there exist 
bad local optima. 
These local optima have  arbitrarily
small objective value  compared to the global optimum. 
And \algname{\nmf}
will get stuck in this local optimum without extra techniques. 

Suppose that we have a directed graph $G = (\groundset, A)$ with 
 four vertices, $\groundset = \{ 1,2,3,4 \} $ and four  arcs, $A = \{ (1,2), 
(2,3), (3,2), (3,4)  \}$. The weights of the arcs are (let $b,c$  be
large positive numbers): $w_{1,2} = c$, $w_{2,3} = c$, $w_{3,4} = c$, $w_{3,2} = bc$.
Let 
$F(S)$ denote the sum of weights of arcs
leaving $S$.  Consider its ELBO (using techniques from \cref{gibbs_multilinear}), 
\begin{align}
f(\x)&  = \multi(\x) + \sum_{i \in\groundset} H(x_i) \\  
& =  \sum_{(i,j)\in A} w_{ij} x_i (1- x_j) +  \sum_{i \in\groundset} H(x_i)\\ 
& = c x_1(1-x_2) + c x_2(1-x_3) + c x_3 (1-x_4) + bc x_3 (1-x_2) +  \sum_{i \in\groundset} H(x_i).
\end{align}

Consider  the point $\y = [0.5, 1, 0, 0.5 ]^\trans$,  it has  function value $f(\y) = c + 2\log 2$. Consider a second   point $\bar \x = [ 1,0,1,0 ]^\trans$,  while the global optimum $f(\x^\star)$ must be greater than $f(\bar \x) = (2+b)c$. When $b$ becomes  large, the ratio $\frac{f(\y)}{f(\x^\star)} \leq \frac{c + 2\log 2}{(2+b)c}$ can be arbitrarily small.

{\algname{\nmf}\ may get stuck on the point $\y = [0.5, 1, 0, 0.5 ]^\trans$.}
This can be illustrated by  considering the course  of \algname{\nmf}. 
Suppose wlog. that  \algname{\nmf}\ processes  coordinates in the order of $1\rightarrow 4$ (actually it is
the same with any orders). 

For coordinate 1, 
$\nabla_1 \multi(\x) = c (1-x_2) $, so $\nabla_1 \multi(\y) = 0$, after applying $\sigma(\nabla_1 \multi(\y) )$,
$y_1$ remains to be $0.5$. 

For coordinate 2, $\nabla_2 \multi(\x) = c (1-x_3) - bc x_3 $, so $\nabla_2 \multi(\y) = c$. When $c$ is sufficiently large (approaching infinity), after applying $\sigma(\nabla_2 \multi(\y))$, $y_2$ will still be 1.

For coordinate 3, $\nabla_3 \multi(\x) = - c x_2 + c (1-x_4) +  bc (1-x_2) $, so $\nabla_3 \multi(\y) = -0.5 c$. When $c$ is sufficiently large (approaching infinity), after applying $\sigma(\nabla_3 \multi(\y))$, $y_3$ will still be 0.

For coordinate 4, 
$\nabla_4 \multi(\x) = -c x_3 $, so $\nabla_4 \multi(\y) = 0$, after applying $\sigma(\nabla_4 \multi(\y) )$,
$y_4$ remains to be $0.5$.

\section{Proofs for \algname{DR-DoubleGreedy}}

\subsection{Hardness of Problem \labelcref{opt_problem}}
\label{supp_hardness}

\begin{observation}
The problem of maximizing a generally non-monotone DR-submodular continuous
function subject to box-constraints is NP-hard. Furthermore, there is no $(1/2+\epsilon)$-approximation for any $\epsilon>0$, unless RP = NP.
\end{observation}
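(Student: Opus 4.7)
The plan is to reduce from the well-known $1/2$-inapproximability of unconstrained submodular set maximization (Feige--Mirrokni--Vondrák) by observing that the multilinear extension of a submodular set function is already continuous DR-submodular, so the hard instances used for that discrete problem embed directly into our continuous box-constrained setting.

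First, I would recall that for any submodular set function $F:2^{\groundset}\to \R$, its multilinear extension $\multi(\x)=\E_{q(S\mid\x)}[F(S)]$ is multilinear (hence has vanishing diagonal second partials) and its off-diagonal second partials equal $\E[F(S\cup\{i,j\})-F(S\cup\{i\})-F(S\cup\{j\})+F(S)]\leq 0$ by submodularity. Thus $\multi$ is continuous DR-submodular on the box $[\zero,\one]^n$, and $\multi$ is in fact already defined in \cref{sec_computing_multilinear} above. Since $\multi(\x)$ is a convex combination of the set-function values $F(S)$, we have the elementary identity
\begin{equation*}
\max_{\x\in[\zero,\one]^n} \multi(\x) \;=\; \max_{S\subseteq\groundset} F(S),
\end{equation*}
with the maximum on the left attained at the vertex corresponding to any optimal $S$.

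Next I would argue that any $(1/2+\epsilon)$-approximation algorithm $\mathcal{A}$ for \labelcref{opt_problem} yields a $(1/2+\epsilon)$-approximation for unconstrained submodular set maximization. Given a submodular $F$, run $\mathcal{A}$ on $\multi$ with box $[\zero,\one]^n$ to obtain $\x$ with $\multi(\x)\geq (1/2+\epsilon)\max_S F(S)$. Then apply pipage rounding to $\x$: because $\multi$ is multilinear, along any coordinate direction it is affine, so one can round one fractional coordinate at a time without decreasing $\multi$, eventually arriving at a vertex $S^{\dagger}$ with $F(S^{\dagger})\geq \multi(\x)\geq (1/2+\epsilon)\max_S F(S)$. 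Oracle access to $\multi$ and its partial derivatives is either exact (via \cref{sec_computing_multilinear} for structured $F$) or arbitrarily accurate via sampling, so the reduction is polynomial.

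Finally, I would invoke the Feige--Mirrokni--Vondrák result that a $(1/2+\epsilon)$-approximation for unconstrained maximization of a non-negative submodular set function, even for symmetric instances such as max-cut on explicit graphs, would imply $\mathrm{RP}=\mathrm{NP}$; NP-hardness follows from the same reductions. Combining with the reduction above gives the claim. The only mildly delicate step is ensuring that the hardness instance $F$ used in the lower bound can be accessed through $\multi$ in a way compatible with \labelcref{opt_problem}'s value-oracle model, but this is routine since each entry $F(S)$ is polynomially computable in all standard hardness constructions and the resulting $\multi$ admits polynomial-time (or Hoeffding-approximate) evaluation as documented in \cref{sec_computing_multilinear}; no genuine obstacle arises.
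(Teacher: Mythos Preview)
Your proposal is correct and follows essentially the same route as the paper: the paper simply observes that the multilinear extension of a submodular set function is continuous DR-submodular and then invokes the reduction already worked out in \citep[Proposition~5]{bian2017guaranteed}, which is precisely the embed-then-round argument you spelled out.

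One small correction: your parenthetical ``even for symmetric instances such as max-cut on explicit graphs'' is misplaced. \textsc{MaxCut} admits approximation ratios well above $1/2$ (Goemans--Williamson), so it cannot witness the $(1/2+\epsilon)$-inapproximability. The $\mathrm{RP}=\mathrm{NP}$ hardness for explicit nonnegative submodular instances comes from different constructions (symmetric submodular families that are \emph{not} cut functions); your argument goes through unchanged once you cite the correct hard instance family rather than \textsc{MaxCut}.
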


The proof is very similar to the that  of \citep[Proposition 5]{bian2017guaranteed}, so we just briefly explain here. 
One observation is that the multilinear extension of
a submodular set function is also continuous DR-submodular, so we can  use the same reduction as in \citep[Proposition 5]{bian2017guaranteed} to
prove the hardness results as above.

\subsection{Detailed  Proof of \cref{thm_cont_doublegreedy} }
\label{supp_proof_dr}

\restatheoremone*

\begin{proof}[Proof of \cref{thm_cont_doublegreedy}]

	Define $\intermed^\pare{k} := (\optcont \vee \x^k) \wedge \y^k$. It is clear
	that $\intermed^\pare{0} = \optcont$ and $\intermed^\pare{n} = \x^\pare{n} = \y ^\pare{n}$. 
	One  can notice that as \cref{alg_cont_doublegreedy} progresses, $\intermed^\pare{k}$
	moves from $\optcont$ to $\x^n$ (or $\y^n$). 
	
	Let $r_a = \frac{\delta_a}{\delta_a + \delta_b}, r_b = 1-r_a$, $u = r_a  u_a + (1-r_a)  u_b$.
	
\restalemmaone*

    \begin{proof}[Proof of \cref{lemma1}]

    One can observe that $\x^\pare{k-1} \leq \y^\pare{k-1}$, so from DR-submodularity: $f(\sete{x^\pare{k-1}}{\ele_k}{u_b}) - f(\x^\pare{k-1}) \geq f(\sete{y^\pare{k-1}}{\ele_k}{u_b}) - f(\sete{y^\pare{k-1}}{\ele_k}{a_{\ele_k}}) \geq -\frac{\delta}{n}$.

	Similarly,  because of $\x^\pare{k-1} \leq \y^\pare{k-1}$ and $u_a \leq b_{\ele_k}$, 	from DR-submodularity: $f(\sete{y^\pare{k-1}}{\ele_k}{u_a}) - f(\y^\pare{k-1} )  \geq  f(\sete{x^\pare{k-1}}{\ele_k}{u_a}) - f(\sete{x^\pare{k-1}}{\ele_k}{b_{\ele_k}} )  \geq -\frac{\delta}{n}  $. 
    \end{proof}
	
\restalemmatwo*

	\begin{proof}[Proof of \cref{lemma2}]

		Step I: 
		
		Let us try to lower bound the RHS of \cref{lemma2}.
		
		\begin{align}\notag 
		f(\x^\pare{k}) - f(\x^\pare{k-1}) & = f(\sete{x^\pare{k-1}}{\ele_k}{r_a u_a + r_b u_b}) - f(\x^\pare{k-1}) \\ \notag
		& \overset{\textcircled{1}}{\geq}   r_a f(\sete{x^\pare{k-1}}{\ele_k}{u_a}) + r_b f(\sete{x^\pare{k-1}}{\ele_k}{u_b}) - f(\x^\pare{k-1})\\\notag
		& = r_a [f(\sete{x^\pare{k-1}}{\ele_k}{u_a}) - f(\x^\pare{k-1})] + r_b [f(\sete{x^\pare{k-1}}{\ele_k}{u_b}) - f(\x^\pare{k-1})]\\ \notag
		&  \overset{\textcircled{2}}{\geq}  r_a \delta_a  - r_b \frac{\delta}{n},
		\end{align}  
		where \textcircled{1} is because of that $f$ is concave along one coordinate, 
		\textcircled{2} is from \cref{lemma1}.

		Similarly,
		\begin{align}\notag
		f(\y^\pare{k}) - f(\y^\pare{k-1}) & = f(\sete{y^\pare{k-1}}{\ele_k}{r_a u_a + r_b u_b}) - f(\y^\pare{k-1}) \\ \notag
		& \geq  r_a f(\sete{y^\pare{k-1}}{\ele_k}{u_a}) + r_b f(\sete{y^\pare{k-1}}{\ele_k}{u_b}) - f(\y^\pare{k-1})\\\notag
		& = r_a [f(\sete{y^\pare{k-1}}{\ele_k}{u_a}) - f(\y^\pare{k-1})] + r_b [f(\sete{y^\pare{k-1}}{\ele_k}{u_b}) - f(\y^\pare{k-1})]\\ \notag
		& \geq - r_a \frac{\delta}{n} + r_b \delta_b. 
		\end{align}

		So it holds that
		\begin{align}\label{eq_step1}
		f(\x^\pare{k}) - f(\x^\pare{k-1}) +  f(\y^\pare{k}) - f(\y^\pare{k-1}) \geq r_a\delta_a + r_b\delta_b - \frac{\delta}{n} =  \frac{\delta_a^2 + \delta_b^2}{\delta_a + \delta_b} - \frac{\delta}{n}. 
		\end{align}

		Step II: 
		
		Now let us upper bound the LHS of \cref{lemma2}.
		
		Notice that  $\intermed^\pare{k-1} := (\optcont \vee \x^\pare{k-1}) \wedge \y^\pare{k-1}$.
		For $\intermed^\pare{k-1}$, its $\ele_k$-th coordinate is $x^\star_{\ele_k}$. From $\intermed^\pare{k-1}$ to $\intermed^\pare{k}$, its $\ele_k$-th coordinate
		changes to be $u$. So,
		
		\begin{align}
		f(\intermed^\pare{k-1}) - f(\intermed^\pare{k}) & = f( \sete{\intermed^\pare{k-1}}{\ele_k}{x^\star_{\ele_k}} )  - f( \sete{\intermed^\pare{k-1}}{\ele_k}{u} )
		\end{align}
		Let us consider the following two  situations: 
		
		\begin{enumerate}
			\item   $x^\star_{\ele_k} \leq u$. 
			
			In this case: 
			\begin{align}\notag
			& f(\intermed^\pare{k-1}) - f(\intermed^\pare{k})\\\notag
			& = f( \sete{\intermed^\pare{k-1}}{\ele_k}{x^\star_{\ele_k}} )  - f( \sete{\intermed^\pare{k-1}}{\ele_k}{u} ) \\\notag 
			& \overset{\textcircled{3}}{\leq}   f( \sete{y^\pare{k-1}}{\ele_k}{x^\star_{\ele_k}} )  - f( \sete{y^ \pare{k-1}}{\ele_k}{u} )    \\  \notag
			& =    f( \sete{y^\pare{k-1}}{\ele_k}{x^\star_{\ele_k}} )  - f( \sete{y^ \pare{k-1}}{\ele_k}{r_au_a + r_b u_b} )     \\ \notag  
			&  \overset{\textcircled{4}}{\leq}  r_a [f( \sete{y^\pare{k-1}}{\ele_k}{x^\star_{\ele_k}} )  - f( \sete{y^ \pare{k-1}}{\ele_k}{u_a} )]  + r_b [f( \sete{y^\pare{k-1}}{\ele_k}{x^\star_{\ele_k}} )  - f( \sete{y^ \pare{k-1}}{\ele_k}{u_b} ) ] \\ \notag
			&  \leq  r_a [f( \sete{y^\pare{k-1}}{\ele_k}{x^\star_{\ele_k}} )  - f( \sete{y^ \pare{k-1}}{\ele_k}{u_a} )] + r_b \frac{\delta}{n} \quad \text{(selection rule of \cref{alg_cont_doublegreedy})} \\\notag
			& \overset{\textcircled{5}}{\leq} r_a [f( \sete{y^\pare{k-1}}{\ele_k}{u_b} ) + \frac{\delta}{n}  - (f( \y^\pare{k-1})  - \frac{\delta}{n} )] + r_b \frac{\delta}{n}  \\\notag
			& \leq r_a \delta_b  + (2r_a + r_b)\frac{\delta}{n}, 	 
			\end{align}
			where \textcircled{3} is because $\intermed^\pare{k-1} \leq \y^\pare{k-1}$ and
			DR-submodularity of $f$, \textcircled{4} is from concavity of $f$ along one coordinate, \textcircled{5} is because of the selection rule of \cref{alg_cont_doublegreedy} and \cref{lemma1}.

			\item $x^\star_{\ele_k} >  u$: 
			
			In this case: 
			\begin{align}\notag
			& f(\intermed^\pare{k-1}) - f(\intermed^\pare{k})\\\notag
			& = f( \sete{\intermed^\pare{k-1}}{\ele_k}{x^\star_{\ele_k}} )  - f( \sete{\intermed^\pare{k-1}}{\ele_k}{u} ) \\ \notag
			& \leq   f( \sete{x^\pare{k-1}}{\ele_k}{x^\star_{\ele_k}} )  - f( \sete{x^ \pare{k-1}}{\ele_k}{u} )    \text{\quad ($\intermed^\pare{k-1}\geq \x^\pare{k-1}$ \& DR-submodularity)}   \\ \notag 
			& =    f( \sete{x^\pare{k-1}}{\ele_k}{x^\star_{\ele_k}} )  - f( \sete{x^ \pare{k-1}}{\ele_k}{r_au_a + r_b u_b} )     \\ \notag  
			&  \leq  r_a [f( \sete{x^\pare{k-1}}{\ele_k}{x^\star_{\ele_k}} )  - f( \sete{x^ \pare{k-1}}{\ele_k}{u_a} )]  + r_b [f( \sete{x^\pare{k-1}}{\ele_k}{x^\star_{\ele_k}} )  - f( \sete{x^ \pare{k-1}}{\ele_k}{u_b} ) ] \\ \notag
			&  \leq r_a \frac{\delta}{n} +  r_b [f( \sete{x^\pare{k-1}}{\ele_k}{x^\star_{\ele_k}} )  - f( \sete{x^ \pare{k-1}}{\ele_k}{u_b} ) ] \\\notag
			&   \leq r_a \frac{\delta}{n} +   r_b  [  ( f( \sete{x^\pare{k-1}}{\ele_k}{u_a} ) + \frac{\delta}{n} )   - ( f(\x^\pare{k-1})  - \frac{\delta}{n} )  ] \\\notag
			& = r_b \delta_a + ( 2r_b + r_a)\frac{\delta}{n} 	
			\end{align}

		\end{enumerate}
	We can conclude that in both the above cases, it holds that
	\begin{align}\label{eq_step2}
		f(\intermed^\pare{k-1}) - f(\intermed^\pare{k}) \leq \frac{\delta_a \delta_b}{\delta_a + \delta_b} + \frac{2\delta}{n}
	\end{align}	
		
Combining \cref{eq_step1} and \cref{eq_step2} we can get,
\begin{align}
	\frac{1}{2} [f(\x^\pare{k}) - f(\x^\pare{k-1}) +  f(\y^\pare{k}) - f(\y^\pare{k-1})] \geq f(\intermed^\pare{k-1}) - f(\intermed^\pare{k})   - \frac{2.5\delta}{n}
\end{align}
Thus we reach 	\cref{lemma2}.	
		
	\end{proof}
	
	Now we can finalize the proof. For \cref{lemma2}, let 
	us sum for $k = 1,...,n$, we can get, 
	\begin{align}
		f(\optcont) - f(\x^\pare{n}) \leq \frac{1}{2} [ f(\x^\pare{n}) - f(\a) + f(\y^\pare{n}) - f(\b)  ] + 2.5\delta 
	\end{align}
	After rearrangement, 
one can show that  
	$ f(\x^n) \geq  \frac{1}{2}  f(\optcont)  + \frac{1}{4}  [f(\a) + f(\b)] - \frac{5\delta}{4}$.
	
\end{proof}

\section{Mean Field Lower Bounds for PSMs}
\label{sec_meanfield_lowerbounds}

Log-submodular models \citep{djolonga14variational} are a class of
probabilistic point processes  over subsets of a ground set $\groundset = [n]$,
where the log-densities are submodular set functions $F(S)$:
$p(S) = \frac{1}{\parti}\exp(F(S))$, where 
$\parti = \sum_{S\subseteq \groundset}\exp(F(S))$
is the  partition function.

Mean-field inference aims to approximate $p(S)$ by
a fully factorized product  distribution
$q(S|{\x}):= \prod_{i\in S}x_i \prod_{j\notin S}(1-x_j), \x\in
[0,1]^n$,
by minimizing the distance measured w.r.t. the Kullback-Leibler
divergence between $q$ and $p$, i.e.,
$\kl{q}{p} =  \sum_{S\subseteq \groundset} q(S|{\x})
\log\frac{q(S|{\x})}{p(S)}$. $\kl{q}{p}$ is non-negative,
so 
\begin{align}\notag 
& 0\leq \kl{q}{p} =  \sum_{S\subseteq \groundset} q(S|{\x})
\log\frac{q(S|{\x})}{p(S)}\\
& = -  \E_{q(S|{\x})} [\log p(S)] + \entropy{q(S|{\x})}  \\   
&= 
  -\sum_{S\subseteq \groundset}  F(S) \prod_{i\in S}x_i \prod_{j\notin S}(1-x_j)+ \sum\nolimits_{i=1}^{n} [x_i\log x_i + (1-x_i)\log(1-x_i)] + \log \parti.
\end{align}
where $\entropy{\cdot}$ is the entropy. So one can get 
$\log \parti \geq \sum_{S\subseteq \groundset}  F(S) \prod_{i\in S}x_i \prod_{j\notin S}(1-x_j)- \sum\nolimits_{i=1}^{n} [x_i\log x_i + (1-x_i)\log(1-x_i)]  = (\text{ELBO})$.

Multilinear extension $\multi(\x)$ of 
a submodular set function is continuous DR-submodular \cite{bach2015submodular}, 
and $- \sum\nolimits_{i=1}^{n} [x_i\log x_i + (1-x_i)\log(1-x_i)] $
is seperable and concave on each coordinate, so 
$(\text{ELBO})$ is DR-submodular w.r.t. $\x$.
Maximizing $(\text{ELBO})$ amounts to 
minimizing the Kullback-Leibler divergence.

For \text{(PA-ELBO)}  \labelcref{pa_elbo} , it is the sum of two multilinear extensions (weighted by $\beta >0$) and the binary entropy term, since the non-negative
sum of two DR-submodular functions is still DR-submodular, so 
\text{(PA-ELBO)} in \labelcref{pa_elbo} is also continuous DR-submodular. Thus it fits into the general optimization problem \labelcref{opt_problem}.

\section{Full Lower Bounds of PA Objective}
\label{upperbounds_pa}

By giving  upper bounds for 
$\log \parti(\beta; \data') + \log \parti(\beta; \data'')$,
we can get the full lower bounds of the PA objective.

Let us take one $\log \parti(\beta; \data')$ for example. 
This  can be achieved using techniques from \citep{djolonga14variational}, which is done by 
 optimizing  supergradients \citep{djolonga14variational} of  $F(S\mid \data')$.
A representative supergradient
is the bar supergradient, which is defined as: if $i\in A$, $\bar \s^A = F_{\groundset -\{i\}} (\{i \}\mid \data' )$, if $i\notin A$, $\bar \s^A = F (\{i \} \mid \data')$, 
where $F_{B} (A\mid \data')$ is the marginal
gain of $A$ based on $B$. 
 Then, 
\begin{align}
\log \parti(\beta; \data') \leq \min _{A} \log \parti^+(\bar \s^A, F(A\mid \data') - \bar \s^A(A) )  = \min _{A} F(A\mid \data') + \m (A\mid \data'),
\end{align}
where $\m ({\{i\} \mid \data'}) = \log ( 1 + e^ { -F_{\groundset -\{i\}} (\{i \}\mid \data') }) - \log ( 1 + e^{F(\{i\} \mid \data')})$. 

So the full lower bound of  PA objective in  \labelcref{pa_objective} is, 
\begin{align}\label{}
&  \mathrm{log} \sum\nolimits_{S \subseteq \groundset} p_{\beta}(S \mid \data') p_{\beta}(S \mid \data'')   \quad \text{(\palong\  objective)} \\ 
&{=} -\left[ \sum\nolimits_{S \subseteq \groundset} 	q(S|\x) \right] \mathrm{log} \frac{\sum_{S\subseteq \groundset} q(S|\x)}{\sum\nolimits_{S \subseteq \groundset} p_\beta (S \mid \data') p_\beta (S \mid \data'')} \nonumber \\\notag 
&\stackrel{\text{log-sum inequality}}{\geq} -\sum\nolimits_{S \subseteq \groundset} q(S|\x) \ \mathrm{log} \frac{q(S|\x)}{p_\beta(S \mid \data') p_\beta(S \mid \data'')} 
= \H(q) + \mathbb{E}_q \mathrm{log} \ p_\beta(S \mid \data') +\mathbb{E}_q \log \ p_\beta(S\mid \data'') \\\notag 
&  = \underbrace{\H(q) + \beta \ \mathbb{E}_q F(S|\data') +\beta \ \mathbb{E}_q F(S|\data'')}_{\text{(PA-ELBO) in \labelcref{pa_elbo}}}  \!   -\log \parti(\beta; \data') - \!\log \parti(\beta; \data'')  \\
& \geq \max_{q}  \underbrace{\H(q) + \beta \ \mathbb{E}_q F(S|\data') +\beta \ \mathbb{E}_q F(S|\data'')}_{\text{(PA-ELBO) in \labelcref{pa_elbo}}} - 
 \min _{A} \left[F(A\mid \data') + \m (A\mid \data')\right] -  \min _{A} \left[F(A\mid \data'') + \m (A\mid \data'')\right]
\end{align}

\section{Detailed Multilinear Extension in Closed Form}
\label{supp_multilinear_closedform}

\subsection{More on Sampling}

\begin{lemma}[Hoeffding Bound, Theorem 2 in \cite{hoeffding1963probability}]

	Let $X_1, ..., X_m$ be independent random variables
	such that for each $i, a \leq X_i \leq  b$, with $a, b\in 
	\R$. Let $\bar X  = \frac{1}{m} \sum_{i=1}^{m} X_i$. Then
	\begin{align}
			\Pr [|\bar{X} - \E(X)| > t]\leq  e^ {-
			\frac{2t^2m}{(b - a)^2}}.		
	\end{align}
\end{lemma}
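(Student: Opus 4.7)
The plan is to prove the two-sided tail bound by a Chernoff/Markov argument, handling the upper tail $\Pr[\bar X - \E \bar X > t]$ first and then invoking symmetry (applied to $-X_i$) for the lower tail, finally combining them via a union bound. Note that $\E[\bar X] = \frac{1}{m}\sum_i \E[X_i]$, so I will work with the centered variables $Y_i := X_i - \E[X_i]$, which satisfy $\E[Y_i]=0$ and $Y_i \in [a - \E X_i, \ b - \E X_i]$, an interval of the same length $b-a$.

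First I would apply Markov's inequality to the exponential moment: for any $s > 0$,
\begin{align*}
\Pr\!\left[\bar X - \E\bar X > t\right] &= \Pr\!\left[\exp\!\Bigl(s\sum_{i=1}^{m} Y_i\Bigr) > e^{stm}\right] \le e^{-stm}\, \E\!\left[\exp\!\Bigl(s\sum_{i=1}^{m}Y_i\Bigr)\right].
\end{align*}
By independence of the $Y_i$, the expectation factors as $\prod_{i=1}^{m} \E[e^{sY_i}]$, reducing the problem to bounding the moment generating function of a single centered, bounded random variable.

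The key ingredient is Hoeffding's lemma: if $Y$ is a random variable with $\E[Y]=0$ and $Y \in [\alpha, \beta]$, then $\E[e^{sY}] \le \exp\!\bigl(s^2(\beta-\alpha)^2/8\bigr)$. I would prove this by writing $Y$ as a convex combination $Y = \lambda \beta + (1-\lambda)\alpha$ with $\lambda = (Y-\alpha)/(\beta-\alpha)$, using convexity of $u \mapsto e^{su}$ to get $e^{sY} \le \lambda e^{s\beta} + (1-\lambda)e^{s\alpha}$, taking expectations (using $\E Y=0$), and then showing that the resulting function $\varphi(s) := \log(p e^{s(\beta-\alpha)} + (1-p))$ with $p = -\alpha/(\beta-\alpha)$ satisfies $\varphi(0)=0$, $\varphi'(0)=0$, and $\varphi''(s) \le (\beta-\alpha)^2/4$ (the latter by observing $\varphi''$ is the variance of a Bernoulli $\times (\beta-\alpha)^2$, bounded by $1/4 \cdot (\beta-\alpha)^2$), so a second-order Taylor expansion around $0$ yields the claimed bound. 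This is the main technical step and the only nontrivial calculation.

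Applying Hoeffding's lemma to each $Y_i$ (which lies in an interval of length $b-a$), the product bound becomes $\prod_i \E[e^{sY_i}] \le \exp(m s^2(b-a)^2/8)$, so
\begin{align*}
\Pr[\bar X - \E\bar X > t] \le \exp\!\bigl(-stm + m s^2(b-a)^2/8\bigr).
\end{align*}
I would then minimize the right-hand side in $s > 0$, which occurs at $s^\star = 4t/(b-a)^2$, giving the bound $\exp(-2t^2 m/(b-a)^2)$. Applying the identical argument to $-X_i$ gives the symmetric lower tail with the same bound, and a union bound over the two tails yields
\begin{align*}
\Pr[|\bar X - \E\bar X| > t] \le 2\exp\!\bigl(-2t^2 m/(b-a)^2\bigr).
\end{align*}
I note that the lemma as stated in the excerpt omits the factor of $2$; I would either present the two-sided bound with the factor and remark that the one-sided bound has no factor of $2$, or state both bounds explicitly. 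The main obstacle is simply the proof of Hoeffding's lemma (the MGF bound for a bounded centered variable); everything else is a routine Chernoff-bound optimization.
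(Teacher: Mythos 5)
Your proof is correct and is the standard Chernoff--Hoeffding argument: Markov's inequality applied to the exponential moment, factorization by independence, Hoeffding's lemma $\E[e^{sY}]\le \exp(s^2(\beta-\alpha)^2/8)$ for a centered bounded variable (with the variance-of-a-Bernoulli bound on $\varphi''$), optimization at $s^\star = 4t/(b-a)^2$, and a union bound over the two tails. There is nothing to compare against in the paper itself: the lemma is imported verbatim by citation to Hoeffding's 1963 paper and no proof is given, so your write-up supplies an argument the paper deliberately omits. One substantive point in your favor: you correctly observe that the two-sided bound requires a factor of $2$ in front of the exponential, i.e.\ $\Pr[|\bar X - \E\bar X|>t]\le 2\exp(-2t^2m/(b-a)^2)$, whereas the lemma as stated in the paper drops that factor while keeping the absolute value; as written the paper's inequality is the one-sided constant attached to a two-sided event, which is not what Hoeffding's Theorem 2 gives. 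Your proposal to state both versions explicitly is the right fix.
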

According to the  Hoeffding bound \citep{hoeffding1963probability}, one can
 easily derive that $\frac{1}{k}\sum_{i=1}^{k} F(S_i)$ is arbitrarily close to $\multi(\x)$ with increasingly 
 more samples: With probability at least $1- e^{-k\epsilon^2/2}$, it holds that $|\frac{1}{k}\sum_{i=1}^{k} F(S_i) - \multi(\x)| \leq \epsilon \max_S |F(S)| $, for all $\epsilon > 0$.

\subsection{Some Gibbs Random Fields}

\paragraph{Undirected  \maxcut.}
For \maxcut, its objective is $F(\v) = \frac{1}{2}\sum_{(i,j)\in E} w_{ij} (v_i + v_j -2v_i v_j), \v \in \{0,1\}^\groundset  $.  Its 
multilinear extension is $\multi(\x) = \frac{1}{2}\sum_{(i,j)\in E} w_{ij} (x_i + x_j -2x_i x_j), \x \in [0,1]^\groundset $.

\paragraph{Directed \maxcut.} Its objective is $F(\v) = \sum_{(i,j)\in E} w_{ij} v_i (1- v_j), \v \in \{0,1\}^\groundset  $.
Its 
multilinear extension is $\multi(\x) = \frac{1}{2}\sum_{(i,j)\in E} w_{ij} x_i (1- x_j),  \x \in [0,1]^\groundset $.

\textbf{Ising models.}
For Ising models with non-positive pairwise interactions, $F(\v) = \sum_{s\in \groundset} \theta_s v_s + \sum_{(s,t)\in E} \theta_{st}v_s v_t$,  $\v\in \{0, 1 \}^\groundset$, 
this objective can be easily verified to
be submodular. 
Its multilinear extension is:
\begin{align}
\multi(\x)= \sum_{s\in \groundset} \theta_s x_s + \sum_{(s,t)\in E} \theta_{st}x_s x_t 
\end{align}
Lower bound of its log-partition function is 
$\multi(\x) + \sum_{s\in \groundset} H_s(x_s), \x \in [0, 1]^\groundset$. 
When updating $x_s$ and fix all other coordinates, it is easy to see that
\begin{align}
x_s \leftarrow \sigma( \theta_s + \sum_{t\in N(s)} \theta_{st} x_t ), 
\end{align}
where  $N(s)$ are the neighbors of $s$.  

\subsection{More on FLID-style Objectives}

The more refined way to compute partial directives can be expressed by considering the following derivation, 
\begin{align}\notag 
& \nabla_i  \left( \multi(\x) - \sum_ {i\in [n]} u'_i x_i\right)\\\notag
& =  \sum_{S\subseteq \groundset, S\ni i } F(S) \prod_{j \in S\backslash\{i\}}x_j \prod_{j'\notin S}(1-x_{j'}) - \sum_{S\subseteq \groundset\backslash \{i\} }\ F(S) \prod_{j\in S} x_j \prod_{j'\notin S, j'\neq i} (1-x_{j'})\\\notag
& =  \sum_{d=1}^{D}  \left[\sum_{S\subseteq \groundset, S\ni i } \max_{i \in S} W_{i, d} \prod_{j \in S\backslash\{i\}}x_j \prod_{j'\notin S}(1-x_{j'}) - \sum_{S\subseteq \groundset\backslash \{i\} }\ \max_{i \in S} W_{i, d} \prod_{j\in S} x_j \prod_{j'\notin S, j'\neq i} (1-x_{j'})\right]\\\notag
& = \sum_{d=1}^{D}  [ W_{i_d(l_i), d}  \prod_{m=l_i+1} ^n (1-x_{m}) +  \sum_{l=l_i+1}^n  W_{i_d(l), d} x_{i_d(l)} \prod_{m=l+1} ^n (1-x_{m}) \\ \notag
& - \sum_{l = 1}^{l(i)}  W_{i_d(l), d} x_{i_d(l)} \prod_{m=l+1, m\neq l(i)} ^n (1-x_{m})   -   \sum_{l = l(i) + 1}^{n}  W_{i_d(l), d} x_{i_d(l)} \prod_{m=l+1} ^n (1-x_{m})]
\end{align}

\iffalse 
The partition function is
%
$\parti =     \sum_{S\subseteq \groundset}\exp(F(S))  $.
%
%
We use the following algorithm from \cite{Tschiatschek16diversity}  to calculate 
the partition function:

\begin{algorithm}[htbp]
	\caption{ Exact calculation of the partition function using exponential time \citep{Tschiatschek16diversity} }\label{alg_appendix}
%
%
	
	{ $\parti \leftarrow \exp(F(\emptyset))$ \tcp*{add $\exp(F(\emptyset))$}}
	
	\For{\emph{\bf all} $ (i_1, i_2, ..,i_D)\in \groundset ^D $}{ 
		
		{ $a_d \leftarrow W_{i_d, d}, \forall d\in [D] $ \;}
		
		{ $I\leftarrow \cup_{d=1}^D \{ i_d \}$, $X\leftarrow \cup_{d=1}^D \{ j\mid W_{j,d}>a_d \} $ \;}
		
		\If{$I\cap X \neq \emptyset$}
		{Continue \;}
		
%
		
		{ $V' \leftarrow \groundset \setminus I \setminus X $ \;}
		
		{$\parti \leftarrow \parti +  \exp( \sum_{d=1}^{D} W_{i_d, d} + \sum_{i \in I} u'_{i}  ) \prod_{i \in V'} ( 1+\exp( u'_i ) )$  \;}
	} 
	{Return
		$\parti$\;}
\end{algorithm}
\fi 

\subsection{Approximation for Concave Over Modular Functions}
A general form is,
\begin{align}\notag
F(S) & =  \sum_{j=1}^{M} w_j \psi ( \m^j(S) ) \\\notag
& =  \sum_{j=1}^{M} w_j  [ \m^j(S) ]^a. 
\end{align}
$\psi()$ is a concave function, a common choice 
is $\psi(y) = y^a, a\in (0, 1] $. 
A simple approximation is  $\hat F(S) =  \sum_{j=1}^{M} w_j  \sum_{i\in S}  (m^j_i)^a$,
which approximates $F(S)$ up to a factor of $\bigo{|S|^{1-a}}$ \cite{iyer2015submodular}. 
Since $\hat F(S) $ is modular, one can directly get its multilinear 
extension.

\section{More Experimental Results}
\label{supp_moreexp}

We put more results in this section. 
It includes
experiments on both synthetic datasets and
real-world datasets.

\subsection{ELBO Objective}
\label{supp_elbo}

\setkeys{Gin}{width=0.8\textwidth}
\begin{figure}[ht]
	\center 
	\small
	\includegraphics[]{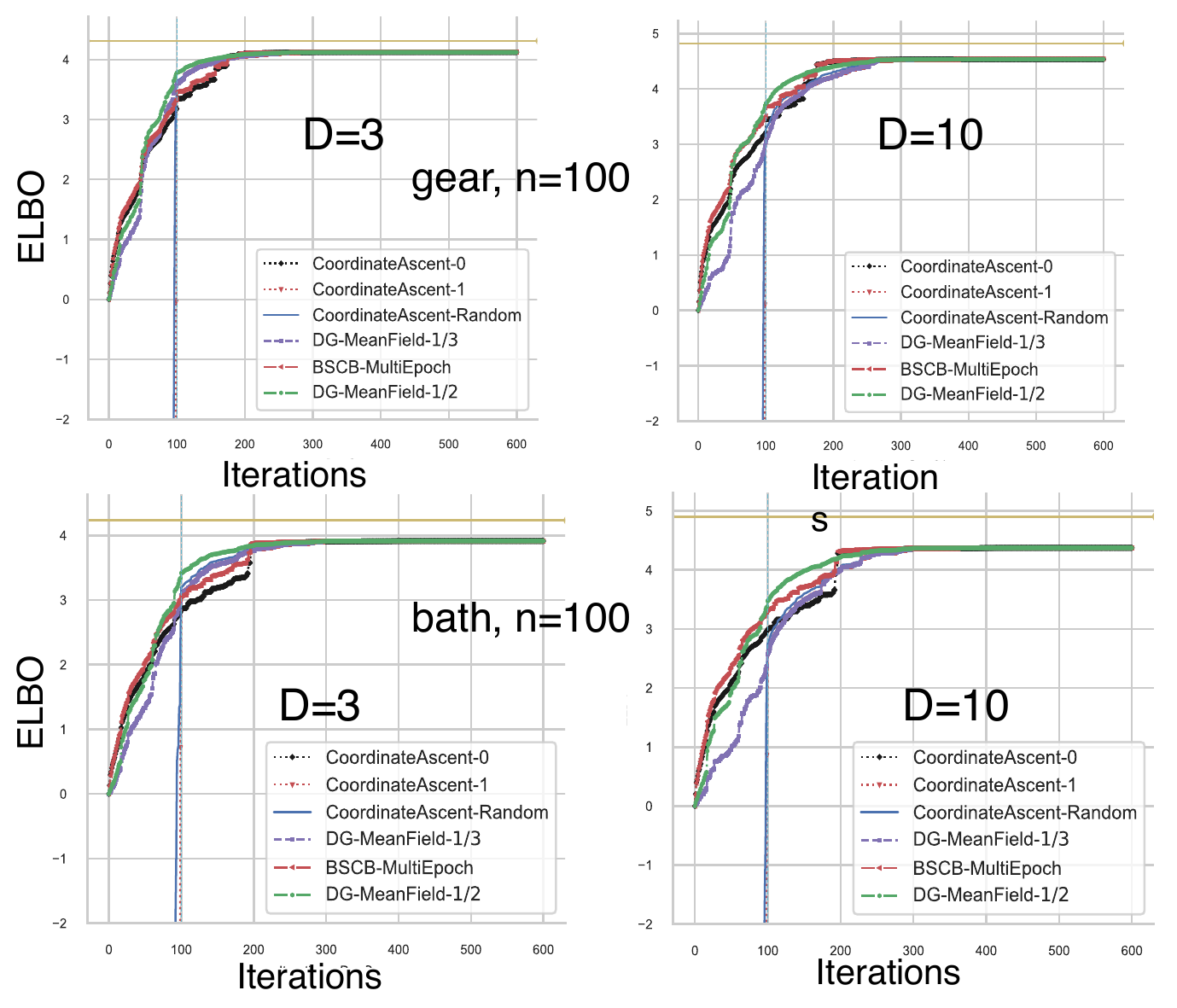}
	\caption{Typical trajectories of multi-epoch algorithms on ELBO objective  for Amazon data. 1st row:  ``gear";
		2nd row: ``bath".
		Cyan vertical line shows the one-epoch point. Yellow line shows  the true log-partition.}
	\label{fig3}
	\vspace{-0.3cm}
\end{figure}
\cref{fig3} records typical trajectories of multi-epoch algorithms for ELBO objectives.
Note that the cyan vertical lines indicate the one-epoch
point.
It shows that after one epoch, \algname{\dgmf-$1/2$}
almost always returns the best solution, and
it  is also the fastest  one to converge.
However, \algname{\nmf} is quite sensitive to initializations.
After sufficiently many iterations, all  multi-epoch
algorithms converge to similar ELBO values.
This is consistent with the intuition because  after one epoch,
all algorithms are  conducting
coordinate-wise maximization.
One can also observe that the obtained ELBO is
close to the true log partition functions (yellow lines).

\subsection{Experiments on \algname{Shrunken Frank-Wolfe}}

Though \algname{shrunken FW} method is not only computationally too expensive, but also have worse approximation guarantee, we still would like 
to see whether it would produces good solution with
more computational resources. In
order to verify this, we run all multi-epoch
algorithms for 6 epochs, while run \algname{shrunken FW} for 60 epochs, results  are shown in the figure bellow: even with {10 times more computations}, \algname{shrunken FW} still performs worse than the  proposed algorithm \algname{DG-MeanField-1/2}. Sometimes \algname{shrunken FW}  has
comparable performance with coordinate descent variant.
\setkeys{Gin}{width=1\textwidth, height=0.19\textwidth}
\begin{figure}[ht]
	\includegraphics[]{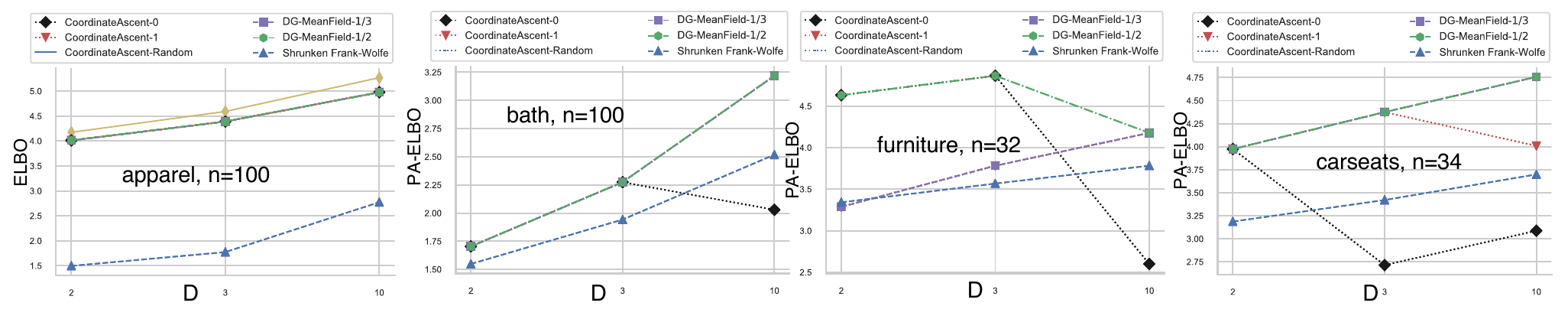}
	\label{fig1}
\end{figure}

\subsection{Synthetic Results}

\setkeys{Gin}{width=0.35\textwidth, height=0.31\textwidth}
\begin{figure}[!ht]
	\footnotesize  
	\subfloat[$n=10$ \label{fig_}]{%
		\includegraphics[]{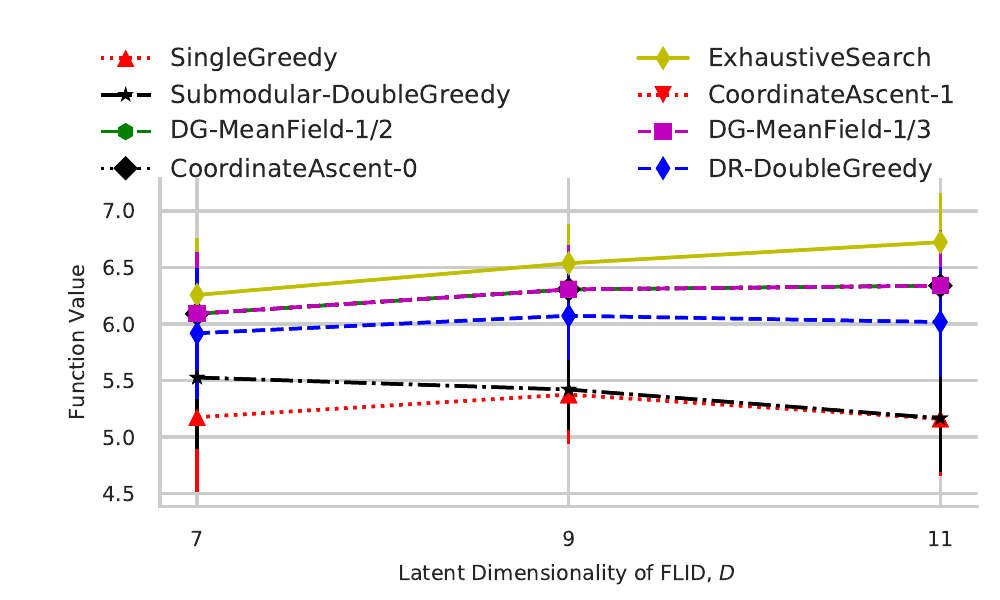}
	}
	\subfloat[$D=7$ \label{fig_}]{%
		\includegraphics[]{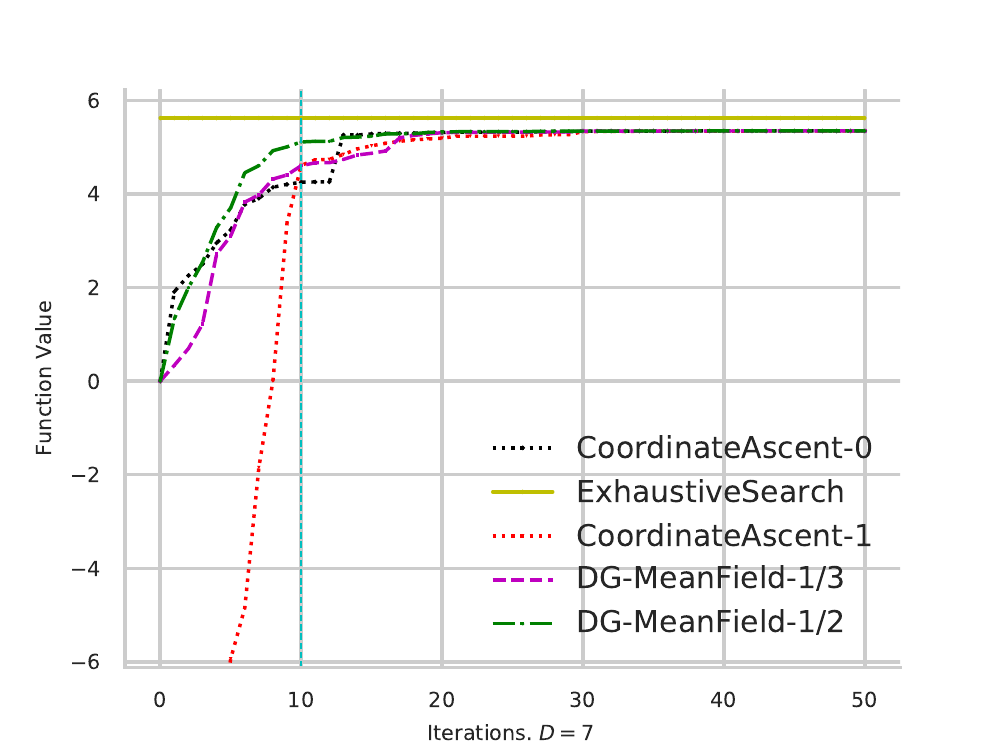}
	}
	\subfloat[$D=11$ \label{fig_}]{%
		\includegraphics[]{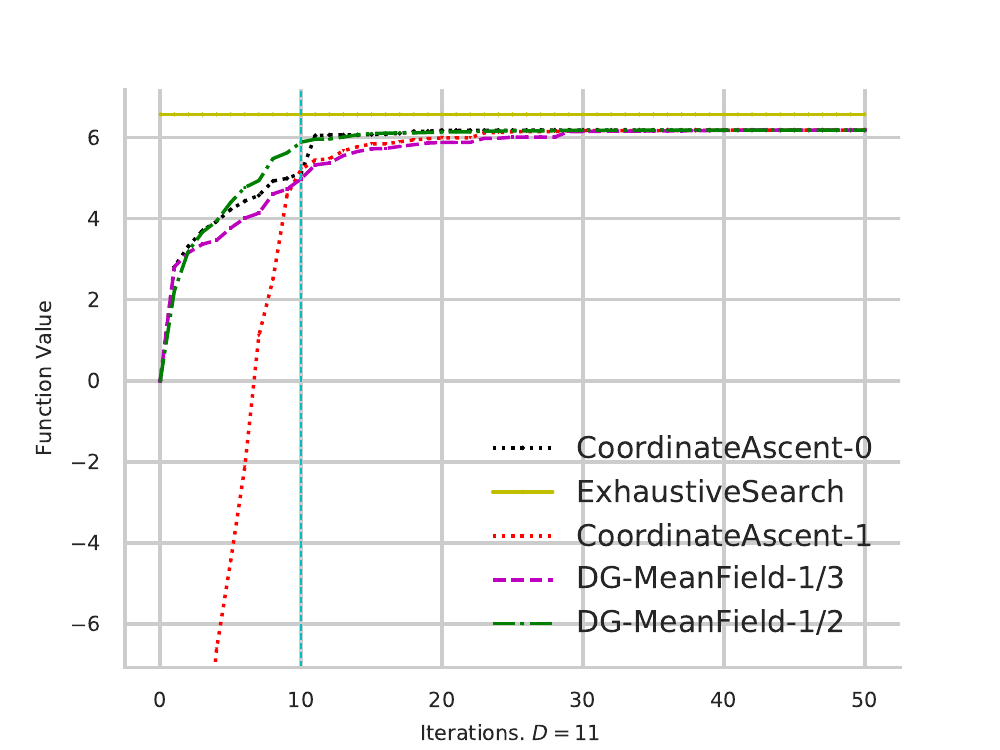}
	}
	\vspace{-.3cm}
	\\
	\subfloat[$n=14$ \label{fig_}]{%
	\includegraphics[]{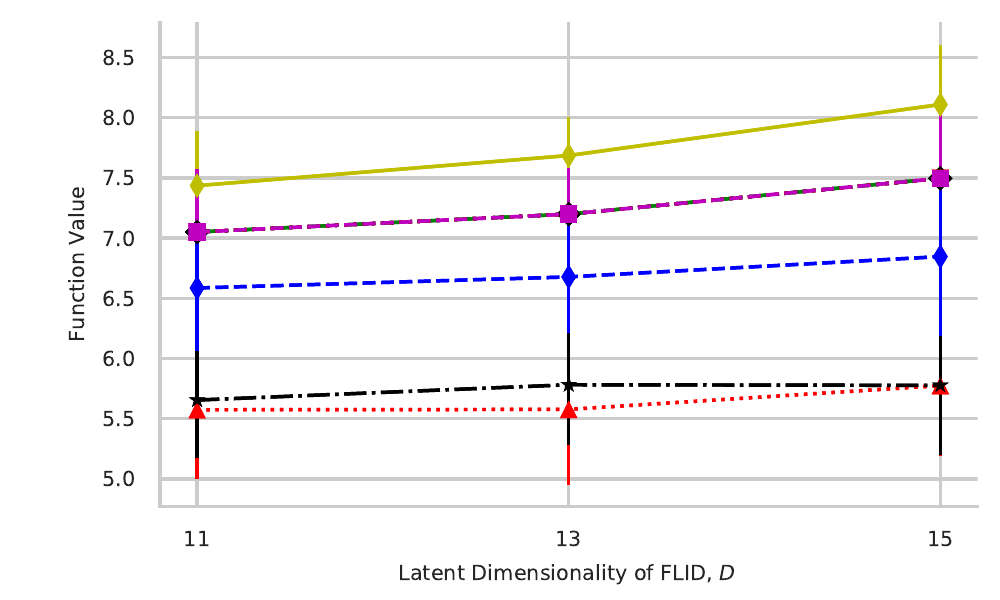}
}
\subfloat[$D=13$ \label{fig_}]{%
	\includegraphics[]{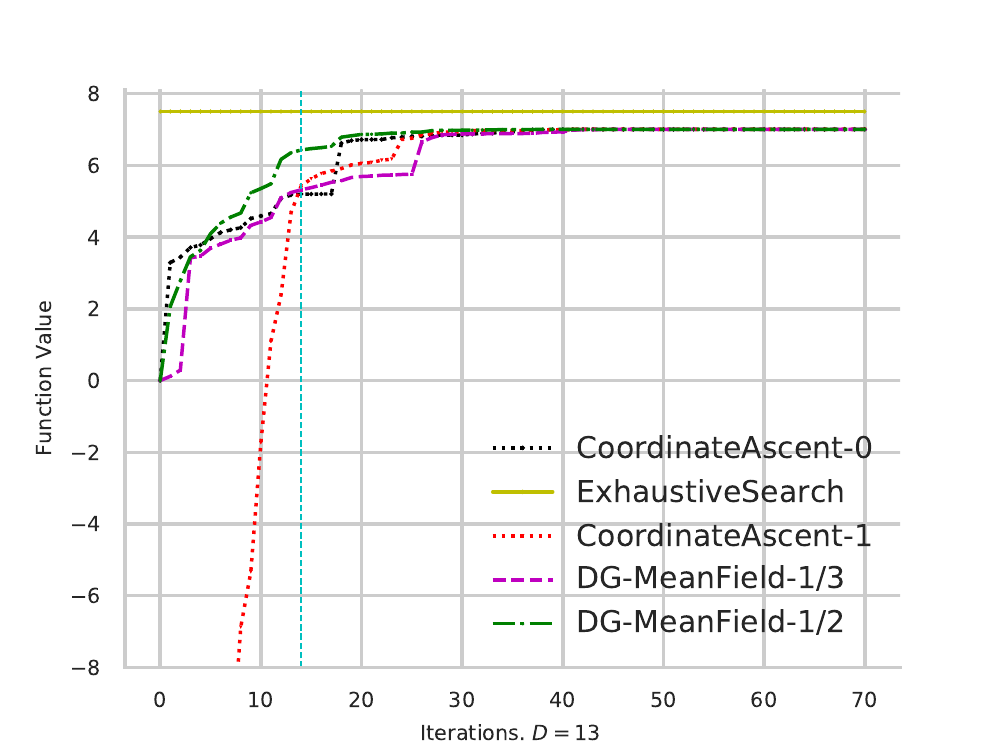}
}
\subfloat[$D=15$ \label{fig_}]{%
	\includegraphics[]{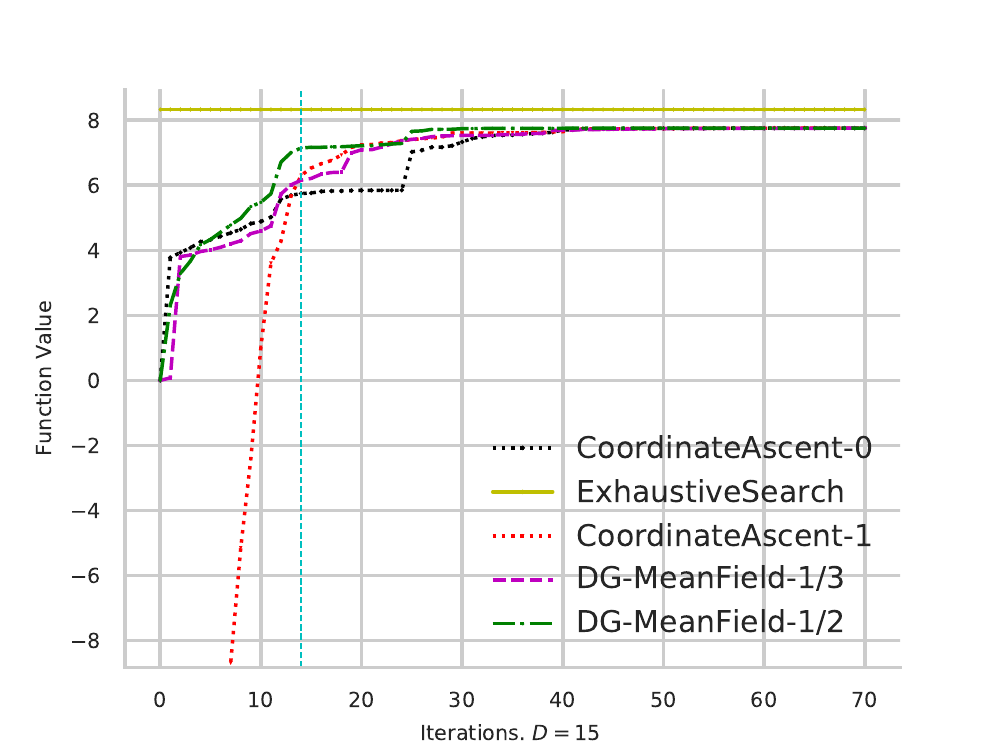}
}
\\
	\subfloat[$n=18$ \label{fig_}]{%
		\includegraphics[]{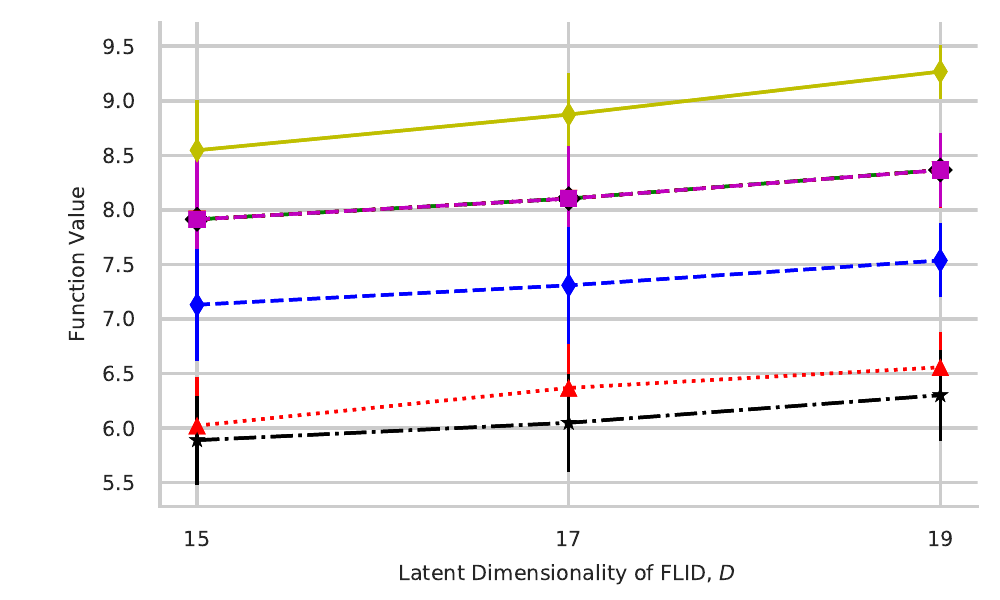}
	}
	\subfloat[$D=17$ \label{fig_}]{%
		\includegraphics[]{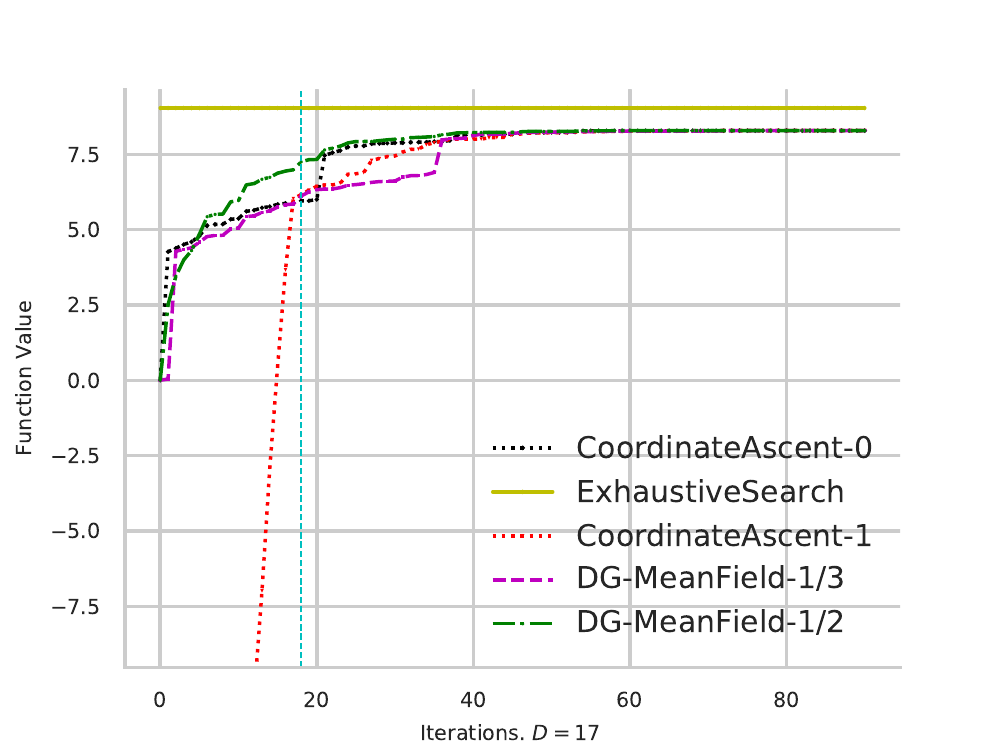}
	}
	\subfloat[$D=19$ \label{fig_}]{%
		\includegraphics[]{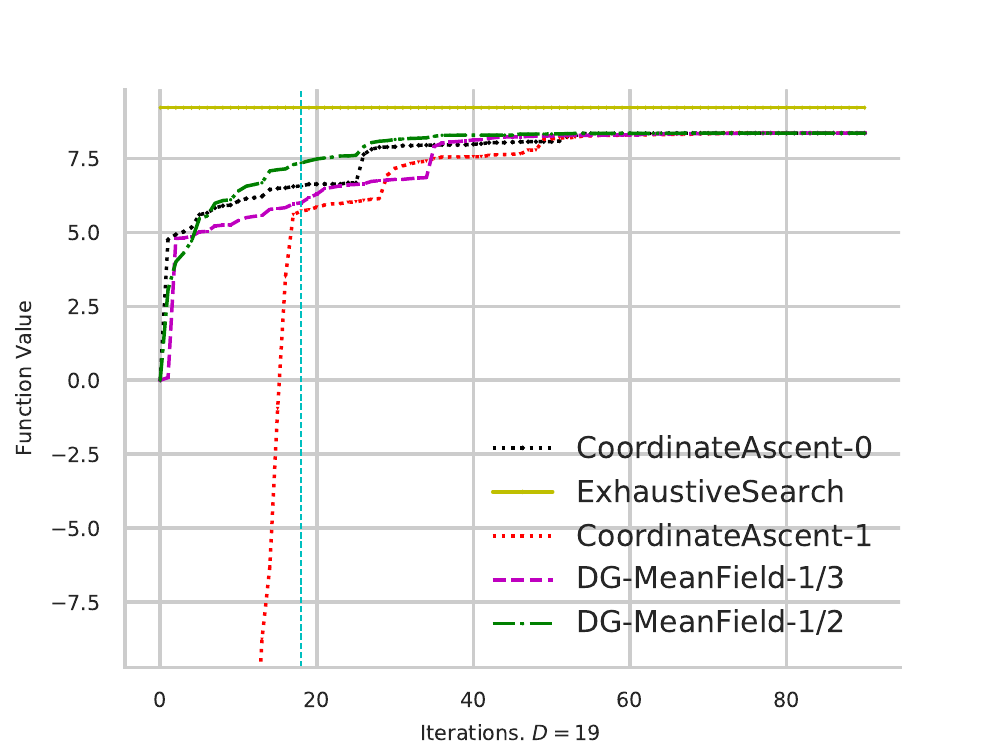}
	}
	\\
	\caption{FLID synthetic results. First column shows the function values returned by different algorithms. The other columns show trajectories of multi-epoch algorithms.  Cyan vertical line shows one-epoch. Yellow line indicates the true log-partition.}
	\label{syn_uD}
\end{figure}

We generate FLID models in the following  manner:  We firstly generate 
the latent representation matrix $\BW\in \R^{n\times D}$ such that
each entry of $W_{i,d} \sim U(0,1)$. It is clear that for FLID, $F(\emptyset)=0$. We then 
set $\u$ to be  proportional to 
$D$  in a random way  $\u = 0.1D*\one * U(0,1)$ so the objective  is non-monotone. 
\cref{syn_uD} records the results: one row
corresponds to the results for a specific $n$. 
First column is the function value returned
by the algorithms, which are  the average of 10 repeated experiments. 
The other columns are trajectories of multi-epoch
algorithms, since behavior is similar for different 
repeated experiments, we plot the first one here. 
Yellow lines are the true log-partition
returned by exhaustive search, cyan vertical
lines shows the one-epoch point. 
One can  see that 
for one-epoch algorithms, \algname{DR-DoubleGreedy}
returns the highest value. 
For multi-epoch algorithms, 
\algname{\dgmf-$1/2$} is the fasted one to converge. After sufficiently many
epoches, the three multi-epoch algorithms
converge to  solutions with similar function value. 

\iffalse 
\setkeys{Gin}{width=0.25\textwidth}
\begin{figure}[ht]
	\subfloat[$n=12$ \label{fig_}]{%
		\includegraphics[]{FLID-0F(V)-2018-05-09-n-12-Dbase-11-dataid-1}
	}
	\subfloat[$D=11$ \label{fig_}]{%
		\includegraphics[]{FLID-0F(V)2018-05-09-n-12-Dbase-11-dataid-1-traj-expid-0-D-11}
	}
	\subfloat[$D=15$ \label{fig_}]{%
		\includegraphics[]{FLID-0F(V)2018-05-09-n-12-Dbase-11-dataid-1-traj-expid-0-D-15}
	}
	\subfloat[$D=19$ \label{fig_}]{%
		\includegraphics[]{FLID-0F(V)2018-05-09-n-12-Dbase-11-dataid-1-traj-expid-0-D-19}
	}
	\\
	\subfloat[$n=16$ \label{fig_}]{%
		\includegraphics[]{FLID-0F(V)-2018-05-04-n-16-Dbase-10-dataid-1}
	}
	\subfloat[$D=10$ \label{fig_}]{%
		\includegraphics[]{FLID-0F(V)2018-05-04-n-16-Dbase-10-dataid-1-traj-expid-0-D-10}
	}
	\subfloat[$D=14$ \label{fig_}]{%
		\includegraphics[]{FLID-0F(V)2018-05-04-n-16-Dbase-10-dataid-1-traj-expid-0-D-14}
	}
	\subfloat[$D=18$ \label{fig_}]{%
		\includegraphics[]{FLID-0F(V)2018-05-04-n-16-Dbase-10-dataid-1-traj-expid-0-D-18}
	}
	\\
	\caption{FLID Synthetic. $F(\groundset)=0$. Mangeta vertical line shows one-epoch.}
	\label{syn_FV0}
\end{figure}
\fi

\subsection{More Results on ELBO Objective}

See \cref{fig_supp_elbo} for more results on the ELBO
objective from Amazon data. 

\setkeys{Gin}{width=0.34\textwidth}
\begin{figure}[ht]
	\subfloat[Legend for subfigs (b,d,g) \label{fig_}]{%
	\includegraphics[]{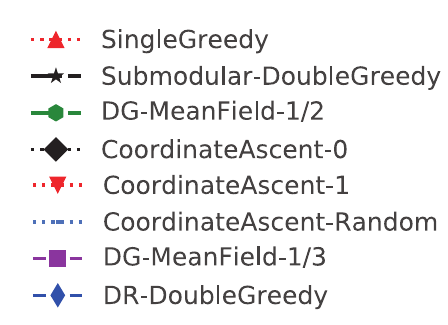}
}
	\subfloat[``strollers", $n=40$ \label{fig_}]{%
		\includegraphics[]{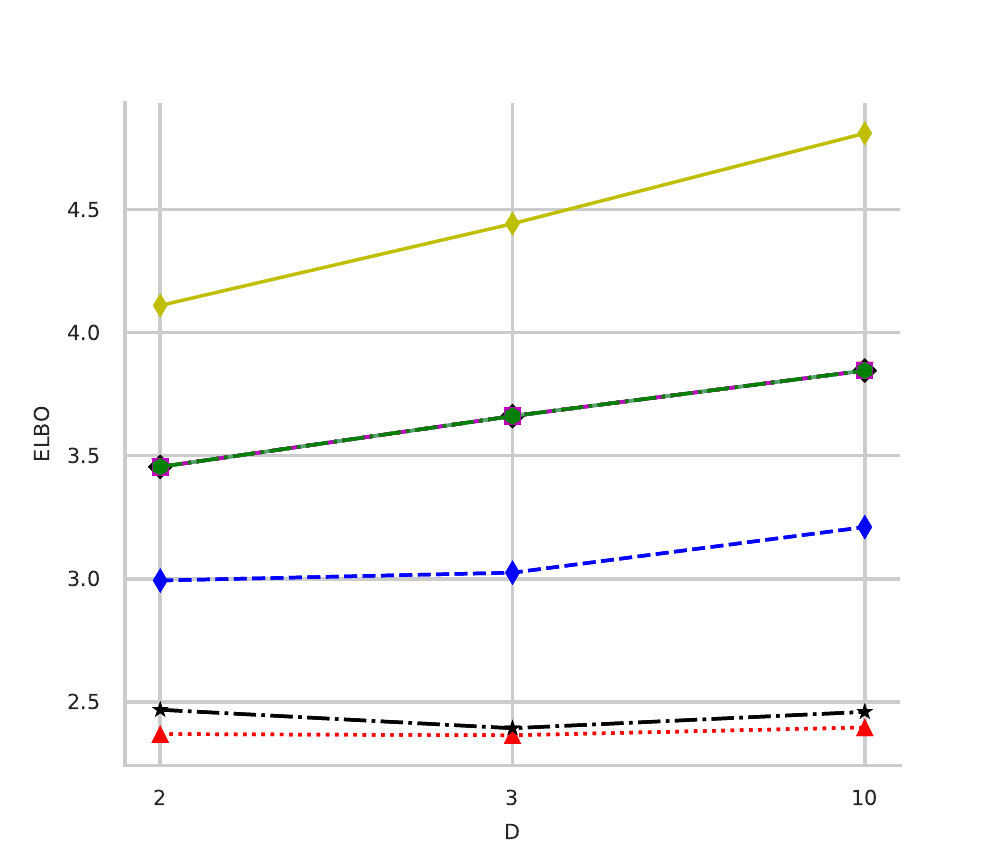}
	}
	\subfloat[$D=10$ \label{fig_}]{%
	\includegraphics[]{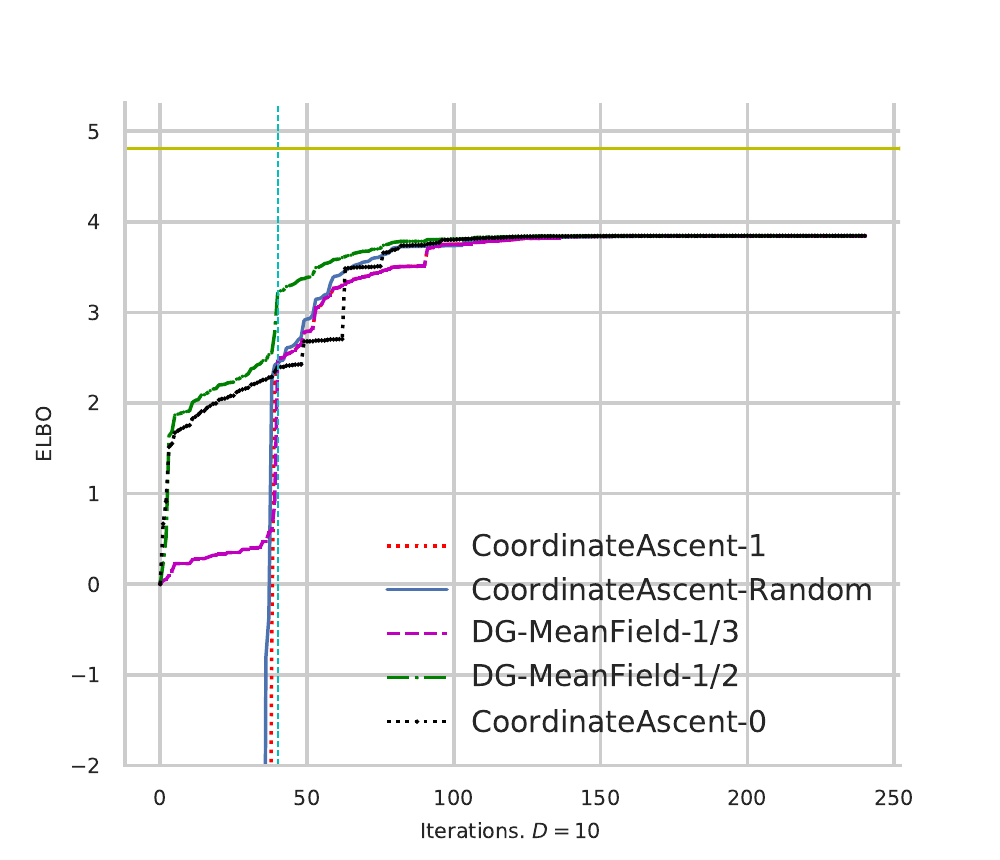}
}\\
	\subfloat[``health", $n=62$ \label{fig_}]{%
	\includegraphics[]{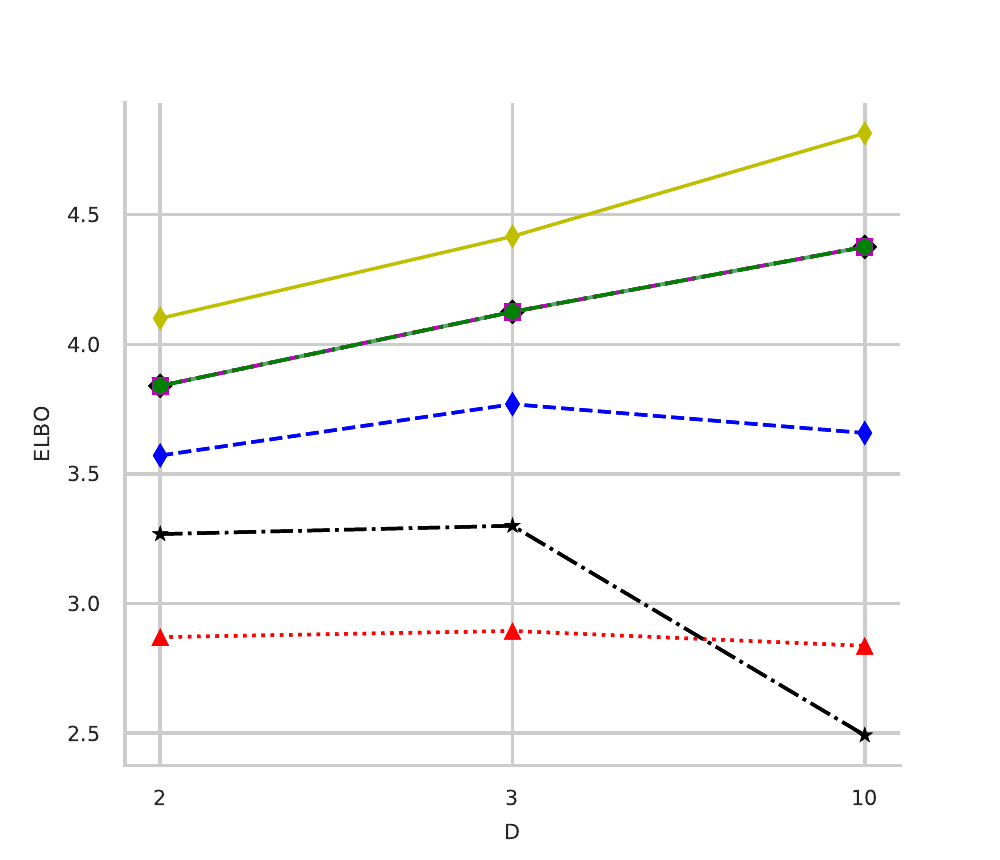}
}
\subfloat[$D=3$ \label{fig_}]{%
	\includegraphics[]{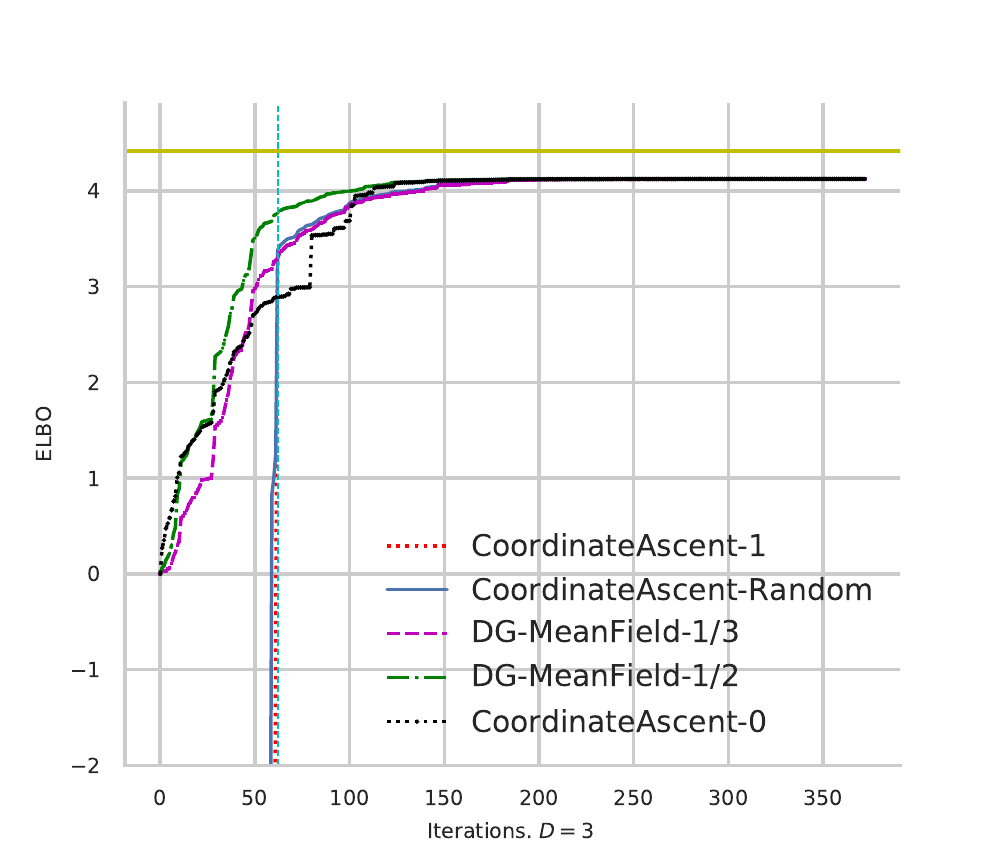}
}
\subfloat[$D=10$ \label{fig_}]{%
	\includegraphics[]{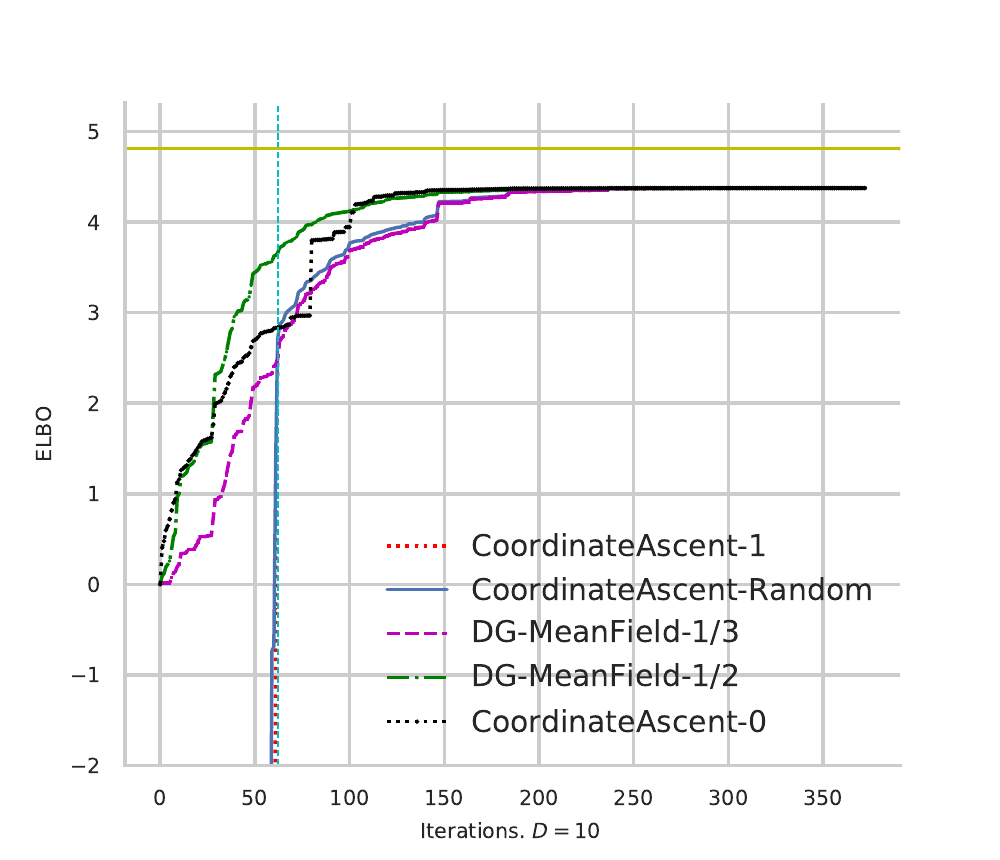}
}\\
	\subfloat[``bath", $n=100$ \label{fig_}]{%
		\includegraphics[]{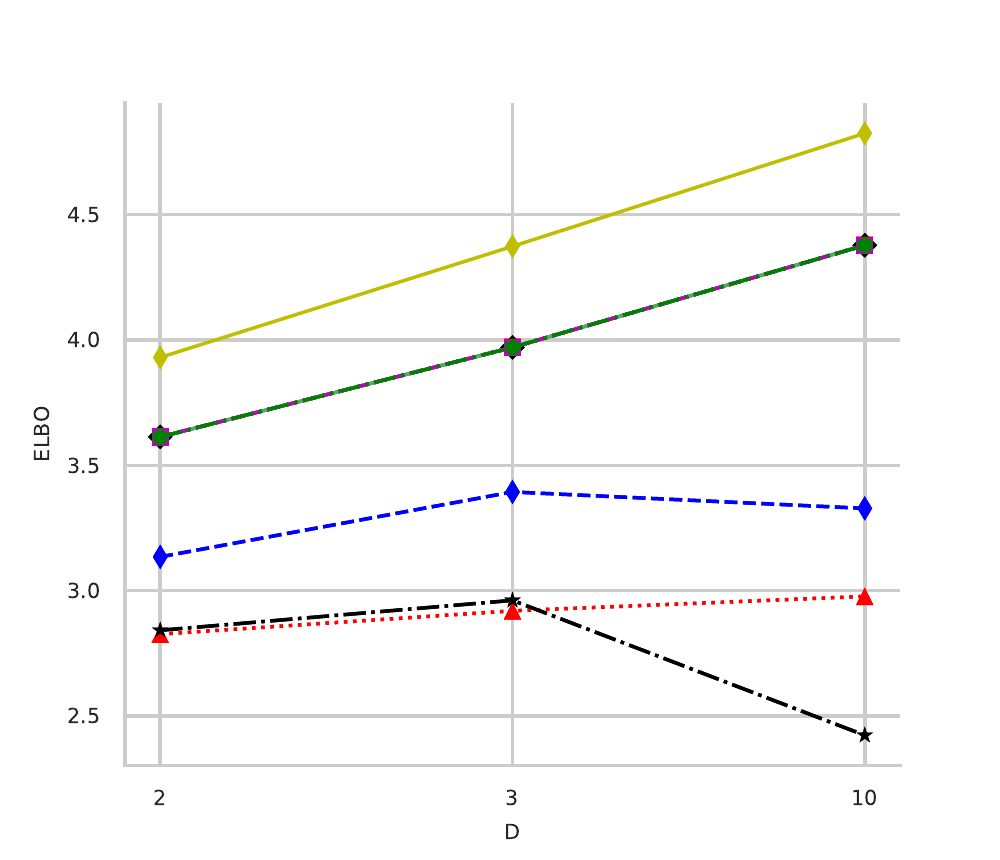}
	}
	\subfloat[$D=3$ \label{fig_}]{%
	\includegraphics[]{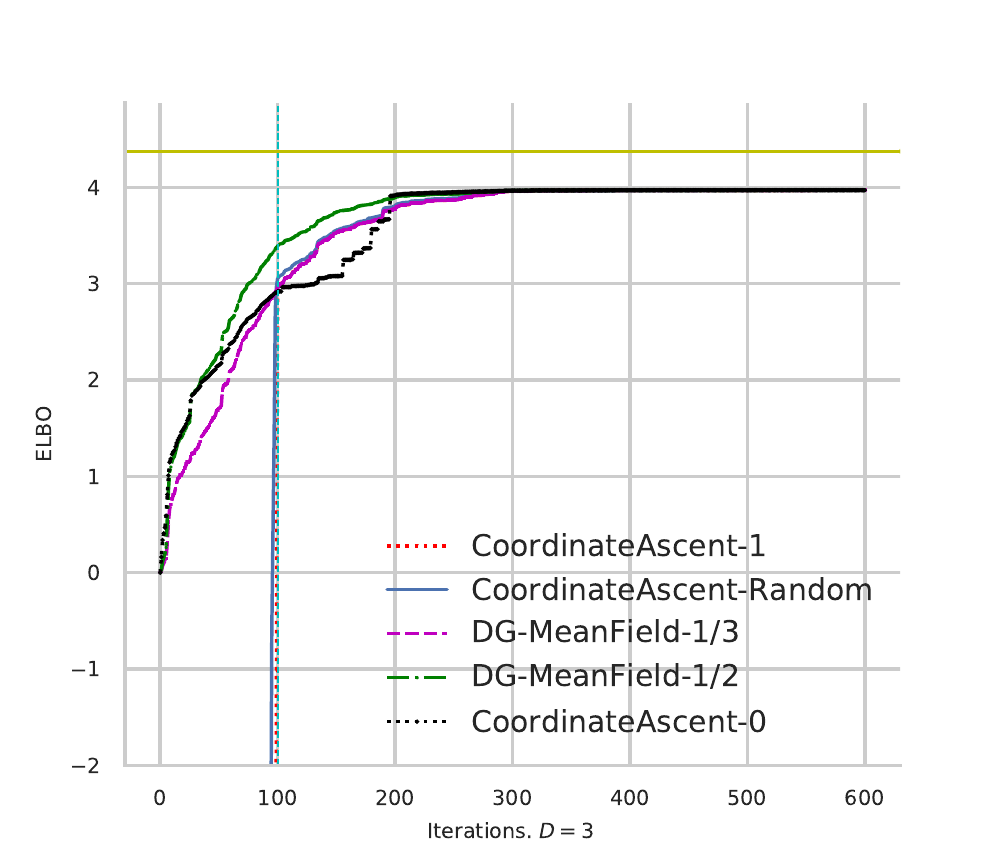}
}
	\subfloat[$D=10$ \label{fig_}]{%
	\includegraphics[]{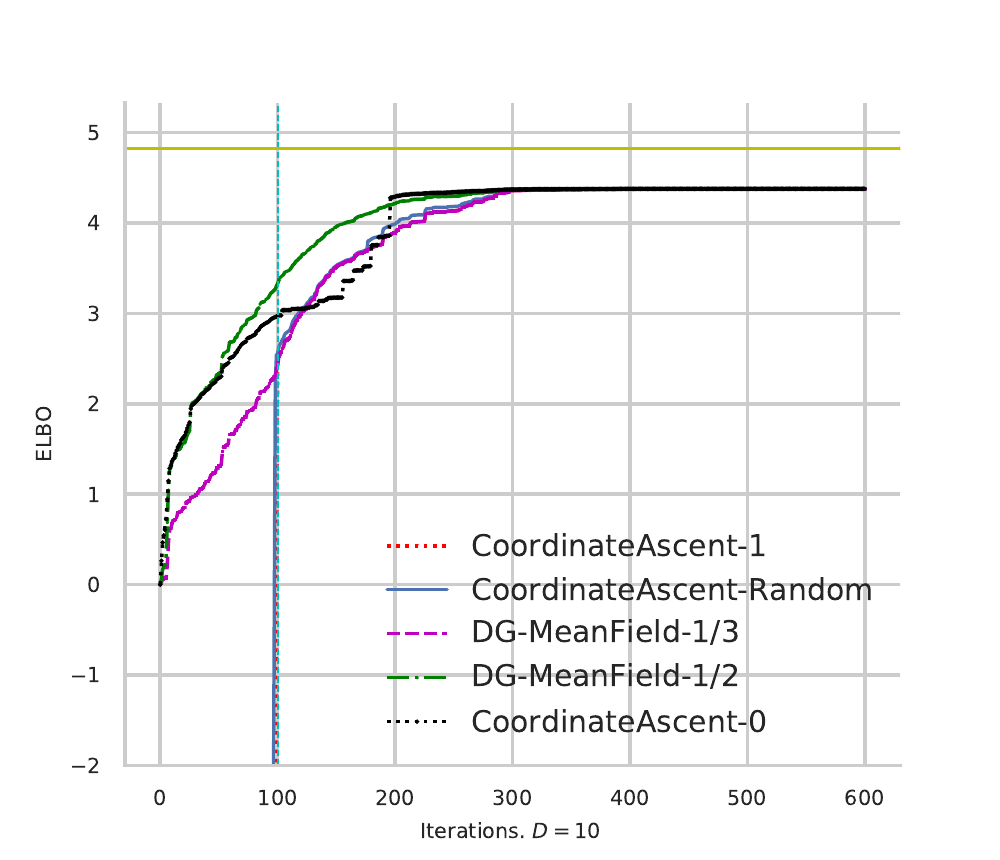}
}\\
	\caption{ELBO objective on Amazon data. 1st row:  ``strollers"; 2nd row: ``health". 3rd row: ``bath"; Subfigs (b,d,g) shows the ELBO returned
	by all algorithms, other columns traces trajectories of multi-epoch algorithms. Cyan vertical line show the one-epoch point. Yellow line shows  the true log-partition.} 
	\label{fig_supp_elbo}
\end{figure}

\subsection{More Results on PA-ELBO Objective}

 \cref{fig_supp_paelbo} illustrates  more results on the  PA-ELBO
objective from Amazon data.

\setkeys{Gin}{width=0.35\textwidth}
\begin{figure}[ht]
	\subfloat[Legend for subfigs (b,d,g) \label{fig_}]{%
	\includegraphics[]{legend_real}
}
	\subfloat[``furniture", $n=40$,  folds (6,10) \label{fig_}]{%
		\includegraphics[]{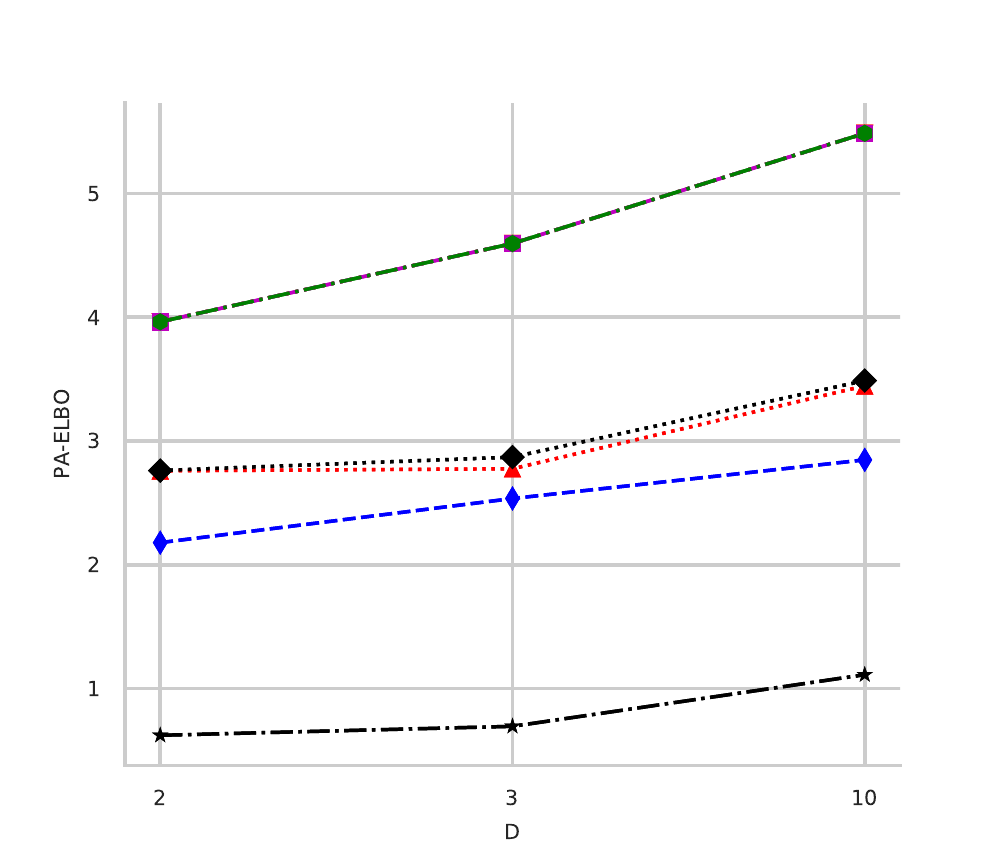}
	}
	\subfloat[$D=10$ \label{fig_}]{%
		\includegraphics[]{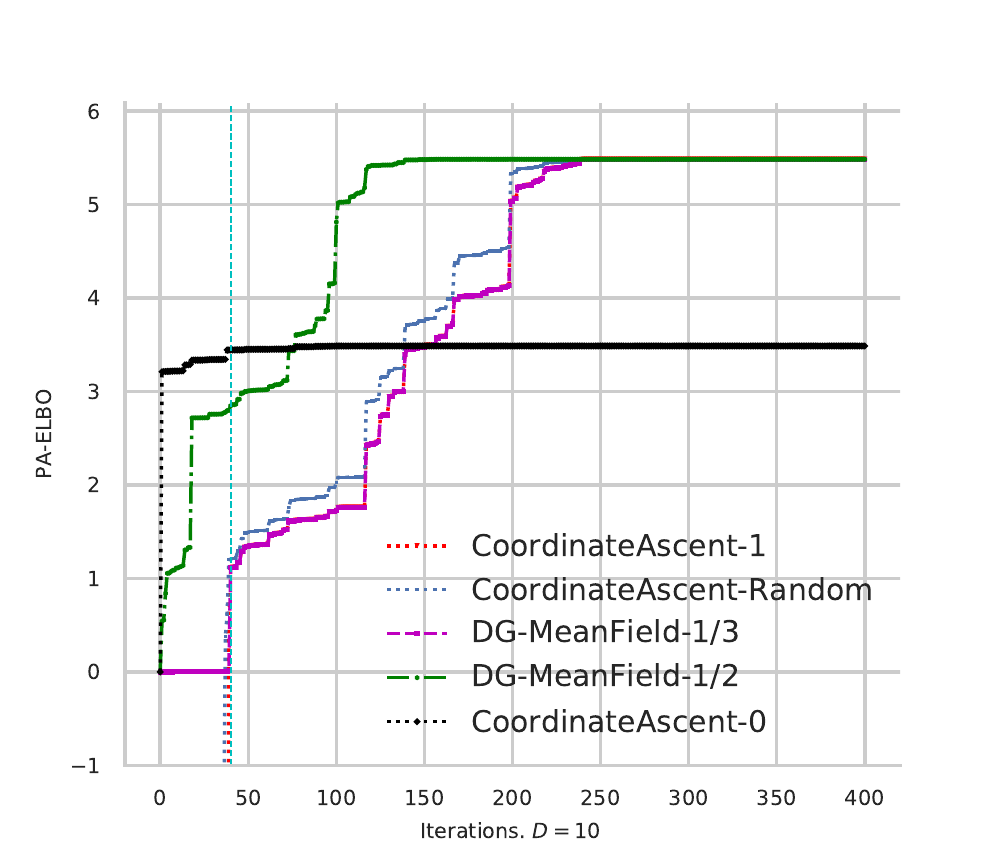}
	}\\
	\subfloat[``toys", $n=62$,  folds (6,8) \label{fig_}]{%
		\includegraphics[]{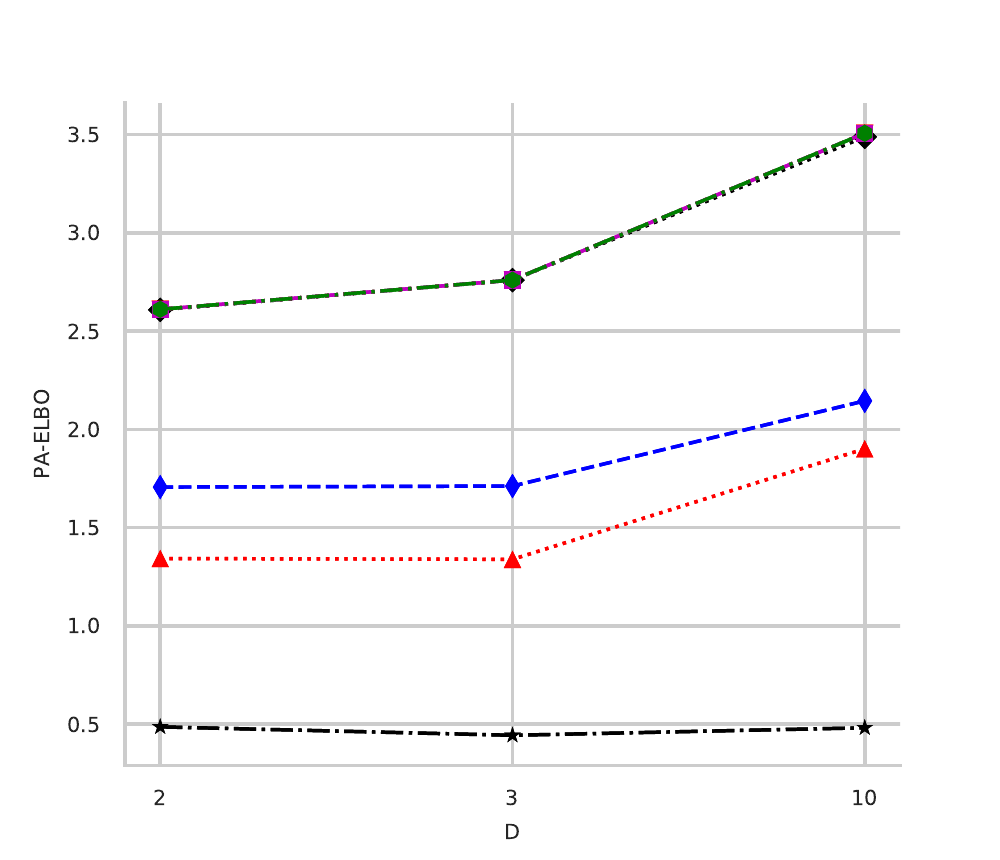}
	}
	\subfloat[$D=3$ \label{fig_}]{%
		\includegraphics[]{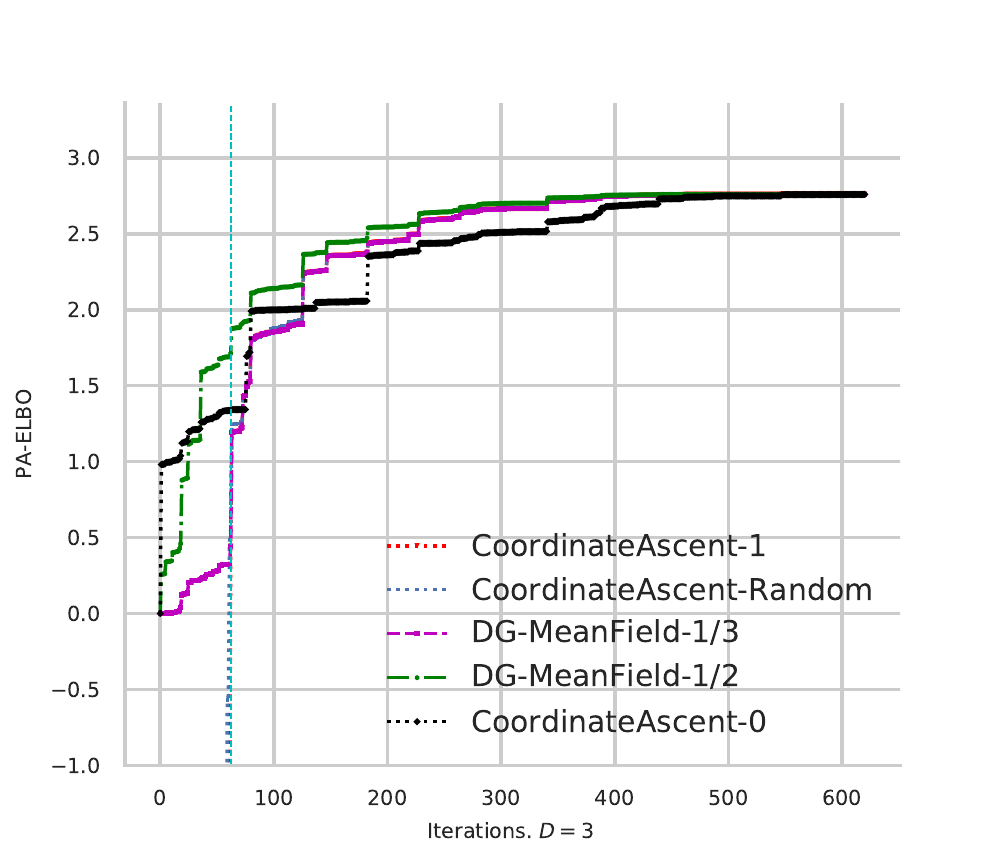}
	}
	\subfloat[$D=10$ \label{fig_}]{%
		\includegraphics[]{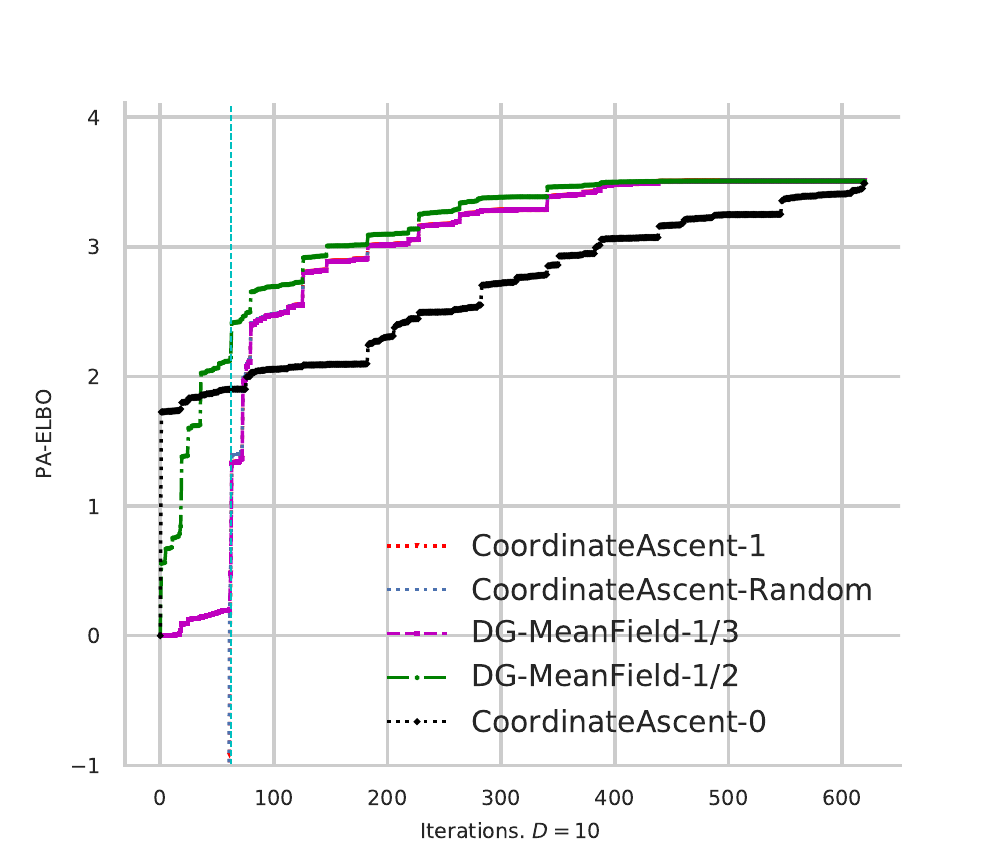}
	}\\
	\subfloat[``bedding", $n=100$,  folds (7,9) \label{fig_}]{%
	\includegraphics[]{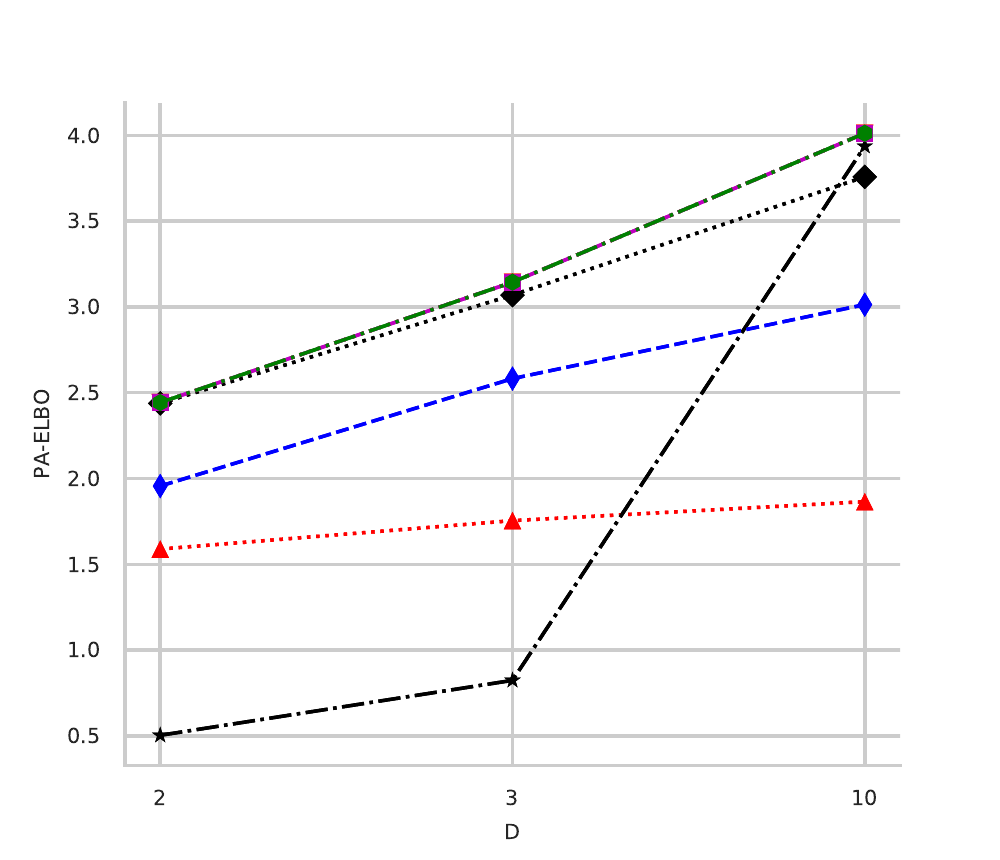}
}
\subfloat[$D=3$ \label{fig_}]{%
	\includegraphics[]{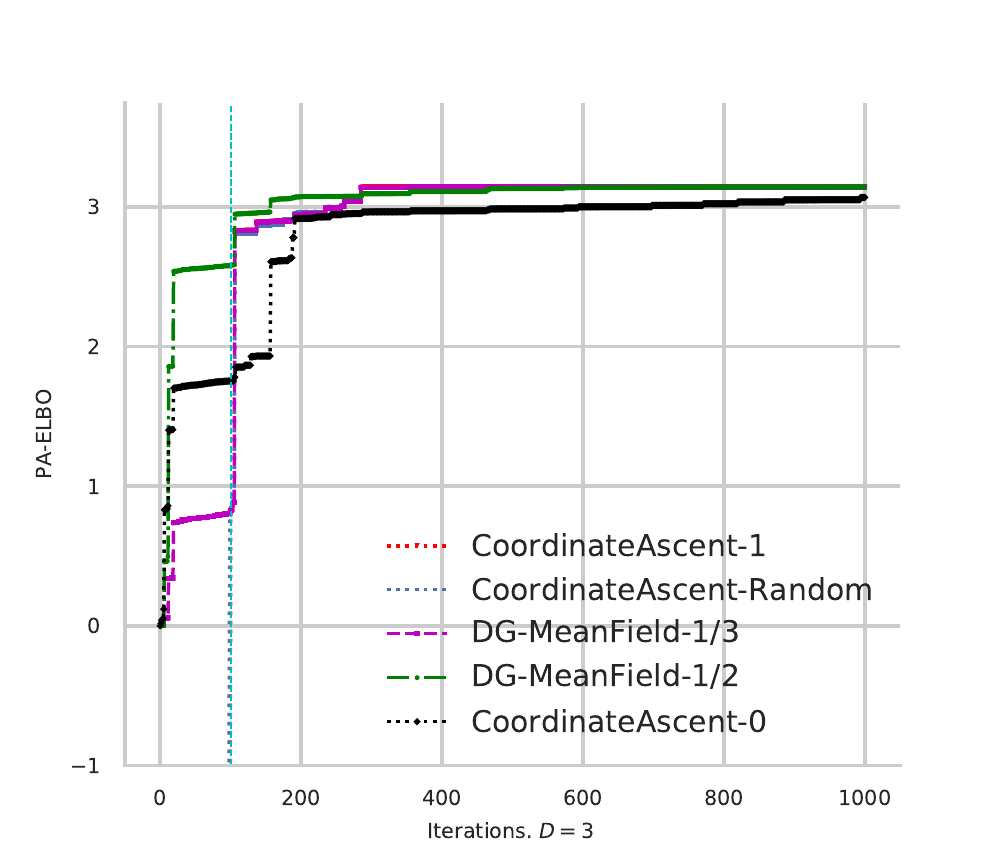}
}
\subfloat[$D=10$ \label{fig_}]{%
	\includegraphics[]{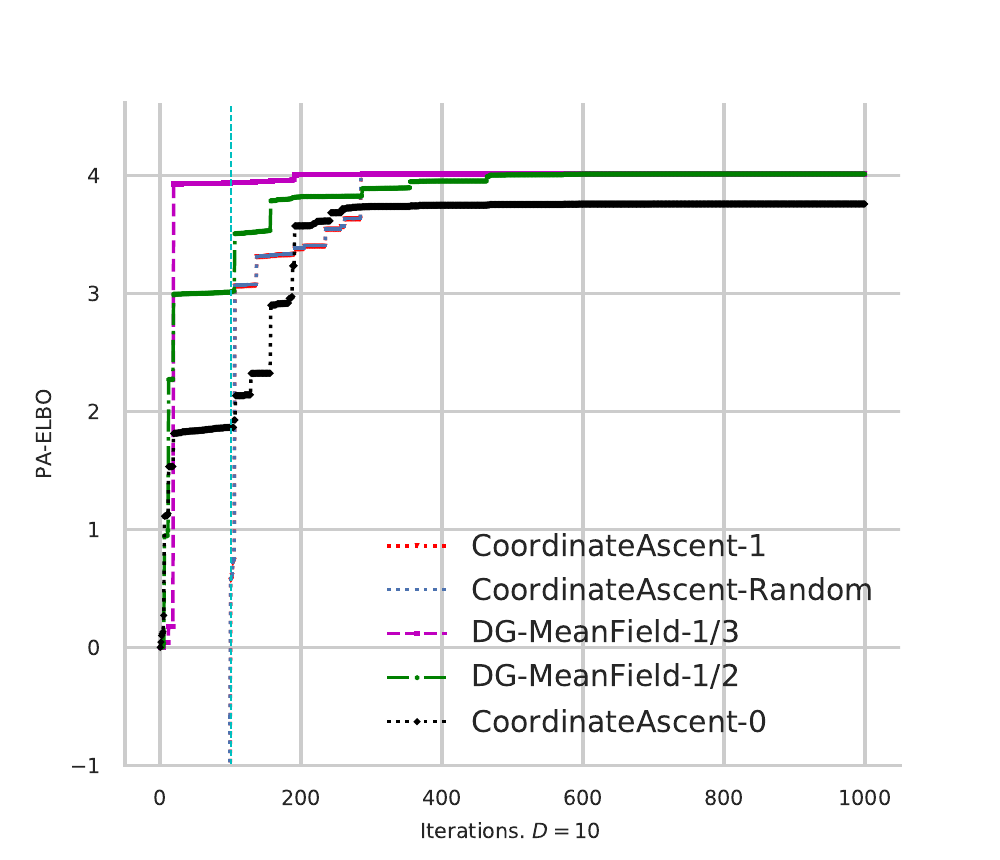}
}\\
	\caption{PA-ELBO objective  on Amazon data. First row: ``furniture"; second row: ``toys"; third row: ``bedding". Subfigs (b,d,g) show the PA-ELBO returned
		by all algorithms, other columns traces trajectories of multi-epoch algorithms. Cyan vertical line shows the one-epoch point.} 
	\label{fig_supp_paelbo}
\end{figure}

\end{document}